\newcommand{\killpunct}[1]{}
\newcommand{\mathsym}[1]{{}}
\newcommand{\unicode}[1]{{}}
\newcommand*{\muGA}{($\mu$+1)~GA\xspace}
\newcommand*{\muga}{($\mu$+1)~GA\xspace}
\newcommand{\om}{\text{\sc OneMax}\xspace}
\newcommand{\onemax}{\text{\sc OneMax}\xspace}
\newcommand{\onemaxz}{\text{\sc OneMax}_z\xspace}
\newcommand{\jump}{\text{\sc Jump}\xspace}
\newcommand{\bo}[1]{\ensuremath{{\mathcal{O}(#1)}}}
\newcommand{\reals}{\ensuremath{\mathbb{R}}\xspace}
\newcommand{\N}{\ensuremath{\mathbb{N}}\xspace}
\newcommand{\pr}{\ensuremath{\mathbb{P}}\xspace}
\newcommand{\filtcond}[1]{\,;\,{#1}\mid\filt}
\newcommand{\filt}{\mathcal{F}_t}
\newtheorem*{informal}{\textbf{Informal}}
\author{Dogan Corus}
\affiliation{
  \institution{Department of Computer Science\\ University of Sheffield}
%  \streetaddress{P.O. Box 1212}
  \city{Sheffield} 
%  \state{Ohio} 
  \postcode{S1 4DP, UK}
}
\email{d.corus@sheffield.ac.uk}
\author{Pietro S. Oliveto}
\affiliation{
  \institution{Department of Computer Science\\ University of Sheffield}
%  \streetaddress{P.O. Box 1212}
  \city{Sheffield} 
%  \state{Ohio} 
  \postcode{S1 4DP, UK}
}
\email{p.oliveto@sheffield.ac.uk}
\begin{document}
%\title{On the Positive Impact of Populations on the Exploitation Speed of Standard Steady-State Genetic Algorithms}
\title[Benefits of Populations on the Exploitation Speed of Steady-State GAs]{On the Benefits of Populations on the Exploitation Speed of Standard Steady-State Genetic Algorithms}

%%%% The submitted version for review should be ANONYMOUS
%\author{Ben Trovato}
%\authornote{Dr.~Trovato insisted his name be first.}
%\orcid{1234-5678-9012}
%\affiliation{%
%  \institution{Institute for Clarity in Documentation}
%  \streetaddress{P.O. Box 1212}
%  \city{Dublin} 
%  \state{Ohio} 
%  \postcode{43017-6221}
%}
%\email{trovato@corporation.com}

\begin{abstract}
It is generally accepted that populations are useful for the global exploration of multi-modal optimisation problems. Indeed, several theoretical results are available showing such advantages over single-trajectory search heuristics. In this paper we provide evidence that evolving populations via crossover and mutation may also benefit the optimisation time for hillclimbing unimodal functions. In particular, we prove bounds on the expected runtime of the standard ($\mu$+1)~GA for OneMax that are lower than its unary black box complexity and decrease in the leading constant with the population size up to $\mu=O(\sqrt{\log n})$. Our analysis suggests that the optimal mutation strategy is to flip two bits most of the time. To achieve the results we provide two interesting contributions to the theory of randomised search heuristics: 1) A novel application of drift analysis which compares absorption times of different Markov chains without defining an explicit potential function. 2) The inversion of fundamental matrices to calculate the absorption times of the Markov chains. The latter strategy was previously proposed in the literature but to the best of our knowledge this is the first time is has been used to show non-trivial bounds on expected runtimes.
\end{abstract}

%\keywords{}

\maketitle

  \section{Introduction}

Populations in evolutionary and genetic algorithms are considered crucial
for the effective global optimisation of multi-modal problems. For this to be the case,
the population should be sufficiently diverse such that it can explore multiple regions of the search space at the same time~\cite{FriedrichOlivetoSudholtWittECJ2009}.
Also, if the population has sufficient diversity,
then it considerably enhances the effectiveness of crossover for escaping from local optima.
Indeed the first proof that crossover can considerably improve the performance of GAs 
relied on either enforcing diversity by not allowing genotypic duplicates or by using unrealistically small crossover rates for the \jump function~\cite{Jansen2002}.
It has been shown several times that crossover is useful to GAs using the same, or similar, diversity enhancing mechanisms for a range of optimisation problems
including shortest path problems~\cite{Doerr2012}, vertex 
cover~\cite{Neumann2011}, colouring problems inspired by the Ising 
model~\cite{Sudholt2005} and computing input output sequences in finite state 
machines~\cite{Lehre2011}. 

These examples provide considerable evidence that,  by enforcing the necessary diversity, crossover makes GAs effective and often superior to applying mutation alone.
However, rarely it has been proven that the diversity mechanisms are actually necessary for GAs, or to what extent they are beneficial to outperform their mutation-only counterparts rather than being applied
to simplify the analysis. 
Recently, some light has been shed on the power of standard genetic algorithms without diversity over the same algorithms using mutation alone. % has started to be revealed.
Dang et al. showed that the plain \muga is at least a linear factor faster than its ($\mu$+1)~EA counterpart at escaping the local optimum of \jump~\cite{DangEtAlTEVC2018}.
Sutton showed that the same algorithm with crossover if run sufficiently many times is a fixed parameter tractable algorithm for the closest string problem while without crossover 
it is not~\cite{Sutton2018}.
Lengler provided an example of a class of unimodal functions to highlight the robustness of the crossover based version with respect to the mutation rate compared to the mutation-only version
i.e., the \muga is efficient for any mutation rate $c/n$ while the ($\mu$+1)~EA requires exponential time as soon as approx. $c>2.13$~\cite{Lengler2018}.
In all three examples the population size has to be large enough for the results to hold,
thus providing evidence of the importance of populations in combination with crossover.

Recombination has also been shown to be very helpful at exploitation if the necessary diversity is enforced through some mechanism.
In the (1+($\lambda,\lambda$))~GA such diversity is achieved through large mutation rates. The algorithm
can optimise the well-known \om function in sublogarithmic time with static offspring population sizes $\lambda$~\cite{doerr_tight_2015}, and in linear time with self-adaptive values of $\lambda$~\cite{DoerrAdaptive}.
Although using a recombination operator, the  algorithm is still basically a single-trajectory one (i.e., there is no 
population). More realistic steady-state GAs that actually create offspring by 
recombining parents %and afterwards apply some random mutation 
have also been analysed for \om. 
Sudholt showed that ($\mu$+$\lambda$)~GAs are twice as fast as their 
mutation-only version (i.e., no recombination) for \om if diversity is 
enforced artificially i.e., genotype duplicates are preferred for deletion~\cite{Sudholthow}. He proved a runtime of  $(e/2) n \ln n 
+O(n)$ versus the  $e n \ln n +O(n)$ function evaluations required by any 
standard bit mutation-only evolutionary algorithm for \om and any other linear 
function~\cite{WittSTACS}. 
If offspring are identical to their parents it is not necessary to evaluate the quality of their solution. 
When the unnecessary queries are avoided,  the expected runtime of the GA using 
artificial diversity from \cite{Sudholthow} is bounded above by $(1+o(1)) 
0.850953 n \ln{n}$ \cite{DoerrCrossover2018}. 
Hence, it is faster than any unary (i.e., mutation-only)
unbiased ~\footnote{The probability of a bit being flipped by an unbiased 
operator is the same for each bit-position.} black-box search heuristic~\cite{LehreWitt2012}

On one hand, the enforced artificiality in the last two results considerably 
simplifies the analysis. On the other hand, the power of evolving populations 
for 
effective optimisation cannot be appreciated.  
Since the required diversity to 
make crossover effective is artificially enforced,  the optimal 
population size is 2 and larger populations provide no benefits. 
Corus and Oliveto showed that the standard \muga 
without diversity is still faster than mutation-only ones by proving 
an upper bound on the runtime of $(3/4) e n \ln n +O(n)$ for any $3< \mu < o(\log n / \log \log n)$~\cite{CorusOlivetoTEVC}. A result of 
enforcing the diversity in~\cite{Sudholthow} was that the best GA for the problem only used a 
population of size 2. However, even though this artificiality was removed in~
\cite{CorusOlivetoTEVC}, a population of size 3 was 
sufficient to get the best upper bound on the runtime achievable with their analysis. 
Overall, their analysis does not indicate any tangible benefit towards using a population larger than $\mu=3$. Thus, rigorously showing that populations are beneficial for GAs
in the exploitation phase has proved to be a non-trivial task.

%If diversity is not enforced, then the same algorithm with population size larger than $\mu>2$ has been shown to be 25\% faster than any 
%evolutionary algorithm relying on standard bit mutation alone for \om~\cite{CorusOlivetoTEVC}. 
%However, this result still does not allow to appreciate the benefits of a population because the upper bound of $(1+o(1)) 3/4 e n \ln n$ holds
%for any $3< \mu < o(\log n / \log \log n)$. Hence, this analysis does not provide any tangible benefit towards using a population larger than $\mu=3$

In this paper we provide a more precise analysis of the behaviour of the population of the \muga for \om.
%
%we provide for the first time a natural example where 
%populations evolved via recombination and mutation optimise faster than any 
%asexual heuristic (i.e., using mutations alone) 
%unary unbiased~\footnote{The probability of a bit being flipped by an unbiased 
%operator is the same for each bit-position.} (mutation-based) black-box search heuristic~\cite{LehreWitt2012}.
%
%More precisely, we consider the \om problem, 
%which is the easiest optimisation problem with unique global optimum for unary unbiased search heuristics~\cite{Corusetal2016}[CHANGE CITATION], %}.
%and 
We prove that the standard \muga is at least  $60\%$ faster than the same algorithm using only mutation. % (i.e., no recombination). 
We also prove that the GA is faster than any unary
unbiased %~\footnote{The probability of a bit being flipped by an unbiased  operator is the same for each bit-position.} 
 black-box search heuristic 
if offspring with identical genotypes to their parents are not evaluated. %~\cite{LehreWitt2012}. %eg., including 
%the popular stochastic local search and simulated annealing.
More importantly, our upper bounds on the expected runtime decrease %in the leading constant 
with the population size up to $\mu =o(\sqrt{\log n})$, %effectively showing for the first time the advantages for exploitation of evolving a population via 
%recombination, mutation and selection (as in natural evolution). 
thus providing for the first time a natural example where 
{\it populations} evolved via recombination  and mutation optimise faster than unary unbiased heuristics. 

%Our contributions in this paper is to (1)
%show that a standard steady-state GA with no added artificialities can beat the unary unbiased black-box 
%complexity of \onemax and (2) present upper bounds on the runtime that decrease with 
%the population size, 

\section{Problem Definition and Our Results}

\subsection{The Genetic Algorithm}
\label{sec-preliminaries}
%We are going to prove our main results using the \muga, 
The \muGA is 
 a standard steady-state GA which samples a single 
new solution at every generation~\cite{EIBENSMITH,SarmaDeJongHANDBOOK}. It keeps a population of the $\mu$ best 
solutions sampled so far and at every iteration selects two solutions from the current 
population uniformly at random with replacement as the \emph{parents}. The 
recombination operator then picks building blocks from the parents to 
create the \emph{offspring} solution. For the case of pseudo-Boolean functions 
$f:\{0,1\}^n \rightarrow \reals$, the most frequently used recombination 
operator is \emph{uniform crossover} which picks the value of each bit 
position $i\in [n]$ from one parent or the other uniformly at random (i.e., from each parent with 
probability 1/2)~\cite{EIBENSMITH}. Then, an unbiased unary 
variation operator, which is called \emph{the mutation operator}, is applied to 
the offspring solution before it is added to the population. The most common 
mutation operator is the standard bit-mutation which independently flips each 
bit of the offspring solution with some probability $c/n$~%$p$~
%(for linear functions the best mutation rate for 
%the $(1+1)$~EA is $1/n$ 
\cite{WittSTACS}. Finally, before moving to the next 
iteration, one of the solutions with the worst fitness value is removed from 
the population. For the case of maximisation the \muGA is defined in 
Algorithm~\ref{alg:mu+1-GA}. The runtime of Algorithm~\ref{alg:mu+1-GA} is the 
number of function evaluations until a solution which maximises the function 
$f$ is sampled for the first time. If every offspring is evaluated, then the 
runtime is equal to the value of the variable $t$ in Alg.~\ref{alg:mu+1-GA} when 
the optimal solution is sampled. However, if the fitness of offspring which are 
identical to their parents are not evaluated, then the runtime is smaller than 
$t$. We will first analyze the former  scheme 	and then adapt the result to the 
latter. 
 \begin{algorithm2e}[t] 
      \caption{\muGA \cite{EIBENSMITH,SarmaDeJongHANDBOOK}
    }
   \label{alg:mu+1-GA}
    $P_1 \gets \mu \textrm{ individuals, uniformly at random from } \{0, 
1\}^n$\; 
$t\gets \mu $\;
    \Repeat{\textrm{optimum is found}}
    {
            Select $x, y \in P_t$ uniformly at random with replacement \;
            $z \gets$ \emph{uniform crossover}$(x, y)$\;
            $z \gets mutate(z)$\;  
	    $P_{t+1} \gets P_t \cup \{z\}$\;
	    Remove the element with lowest fitness from $P_{t+1}$, breaking 
ties at random\; 
    $t \gets t+1$;
    }
    \end{algorithm2e}

\begin{figure}[!tbp]
%   \begin{subfigure}[t]{0.49\textwidth}
    \includegraphics[width=.4\textwidth]{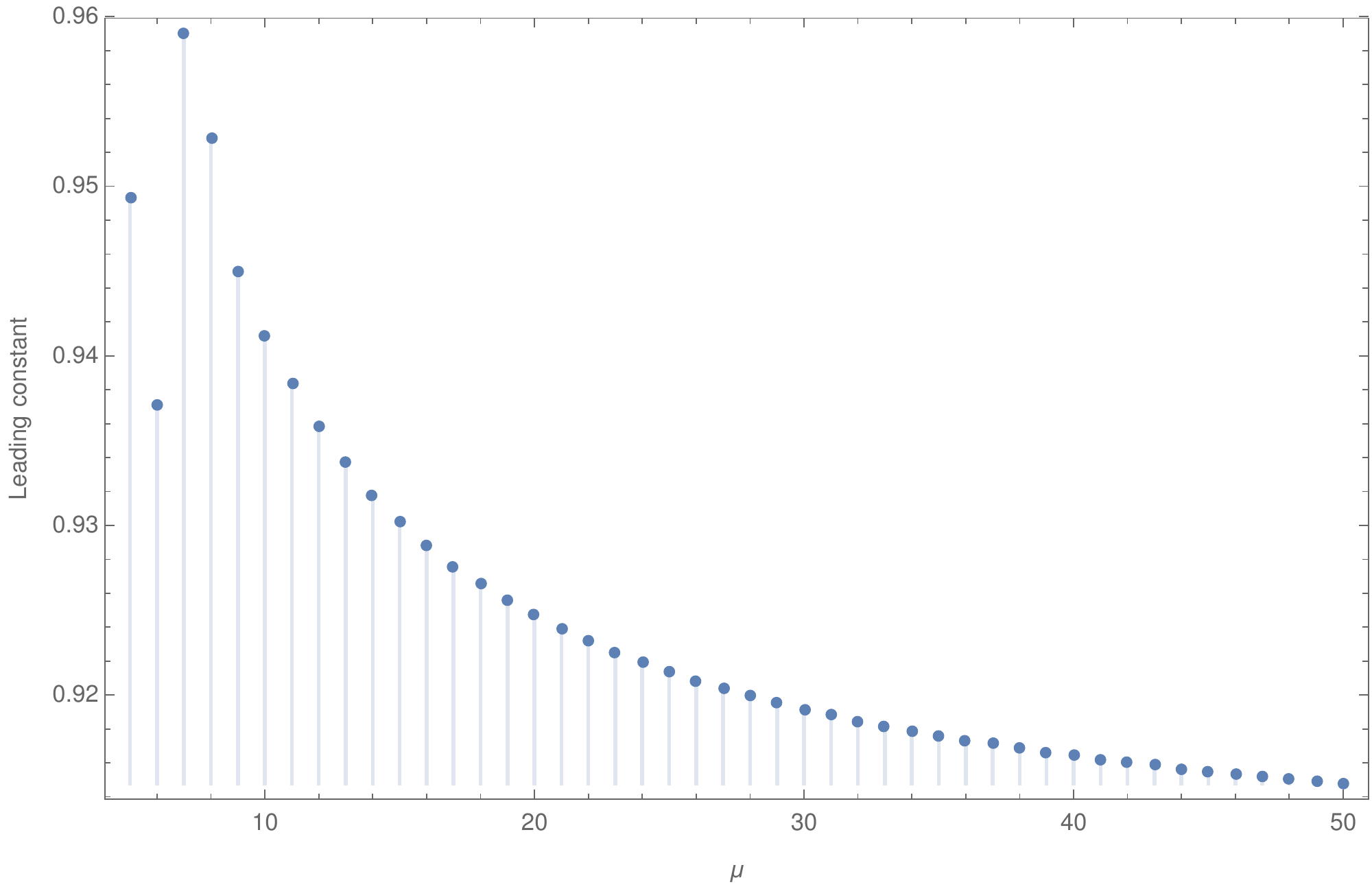}
    \caption{The leading constant from the second 
statement of Theorem~\ref{thm-main} versus the population size. The best 
leading constant achievable by any unary unbiased algorithm is $1$.}
    \label{fig:loc5to50}
%   \end{subfigure}
%   \hfill
%   \begin{subfigure}[t]{0.49\textwidth}
%     \includegraphics[width=\textwidth]{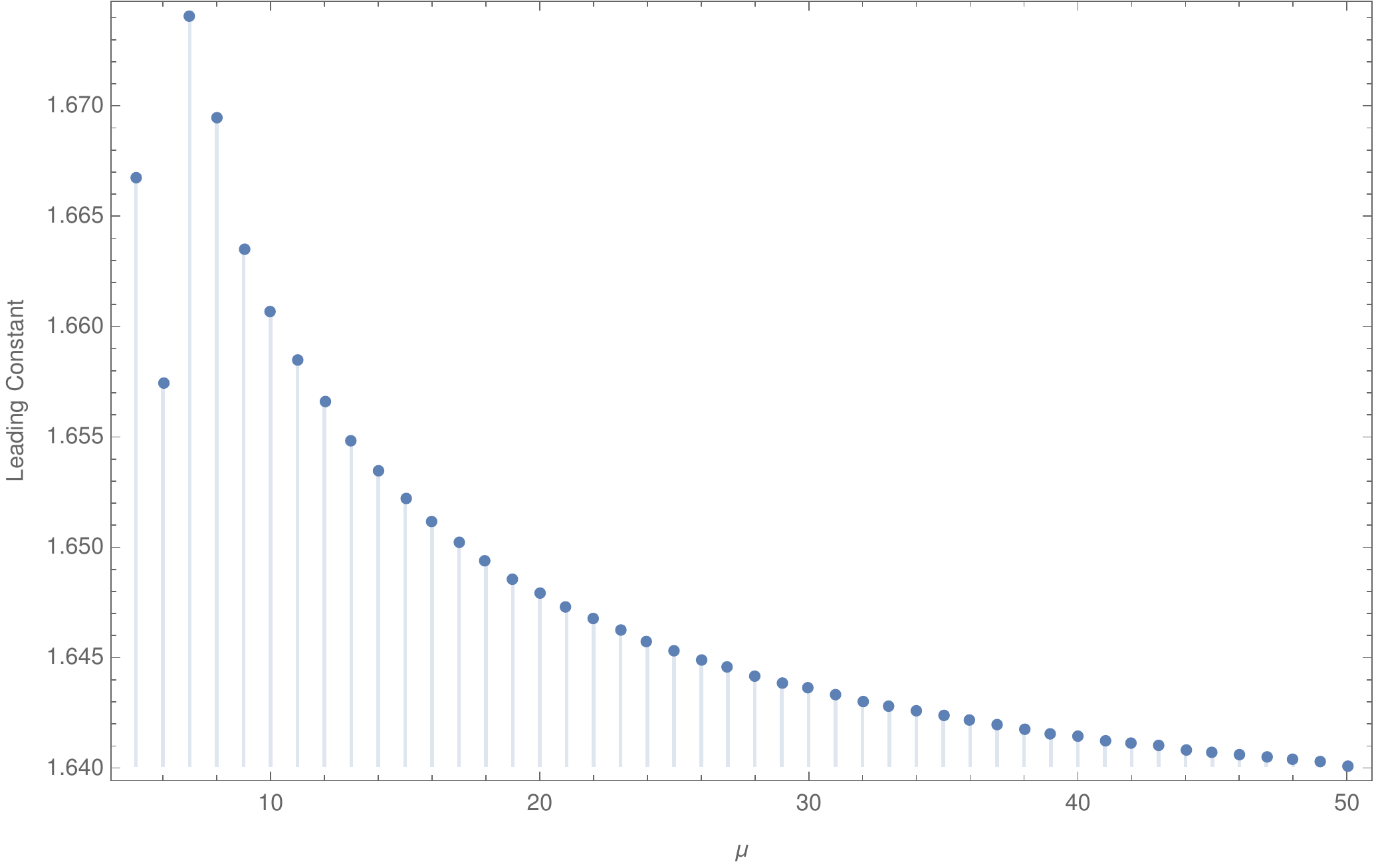}
%     \caption{The leading constant from the first 
% statement of Theorem~\ref{cor-sbm} versus the population size. The best 
% mutation-only variant has a  the leading constant of $e\approx2.71$.}
%     \label{fig:sbm5to50}
%   \end{subfigure}
%   \caption{}
  \end{figure}

\subsection{The Optimisation Problem}

%In the field of computational intelligence, Mastermind is often used as a 
%benchmark problem. In particular, the well studied \onemax optimisation problem 
%is equivalent to Mastermind with 2 colours. 
Given a secret bitstring $z \in 
\{0,1\}^n$, $\onemax_z(x) :=\{ i \in [n] | z_i = x_i \}$ returns the number of 
bits on which a candidate solution $x \in \{0,1\}^n$ matches $z$ 
\cite{WittSTACS}. The 
\emph{optimisation time} (synonymously, \emph{runtime})  is defined by the 
number of queries to the function required by an algorithm to minimise the 
Hamming distance between the candidate solution and the hidden bitstring $z$. 

\subsection{Our Results}

In this paper we prove the following results.

\begin{informal}
 \label{thm-summary}
 The expected runtime for the \muga (%Algorithm~\ref{alg:mu+1-GA}, Section~\ref{sec-preliminaries}) 
 with unbiased mutations and  population size 
$\mu=o(\sqrt{\log{n}})$ to optimise
the \onemax function is

\begin{enumerate}

\item  $E[T]\leq (1+o(1))n \ln{n}  \cdot \gamma_1 (\mu, p_0, p_1, p_2),$  
% \frac{(1-p_{0}) }{p_{1} +  p_2 
% 2\xi^* (\mu)}$,
if offspring identical to their parents are not evaluated for 
their quality is known and $p_0$, $p_1$, and $p_2$ are respectively the 
probabilities that  zero, one or two bits are flipped, 
[Theorem~\ref{thm-main}, Section~\ref{sec-main}] 

\item $E[T]\leq (1+o(1))n \ln{n} \cdot \gamma_2(\mu, c),$  
%  \frac{e^c}{c+ c^2\xi^*(\mu)   }$,  
 if the quality of all offspring is 
evaluated and using standard bit mutation %~\footnote{Standard bit mutation 
%flips each bit independently with probability $c/n$ and is the operator that is 
%most commonly used in genetic algorithms.} 
with rate $c/n$, $c \in 
\Theta(1)$, [Corollary~\ref{cor-sbm}, Section~\ref{sec-main}]

%\item  $E[T]\leq (1+o(1))n \ln{n}   
% \frac{(1-e^{-c})e^c}{c+ 
%\frac{ c^2 \mu}{(\mu+1)}    \xi (\mu)} $,  if offspring identical to their 
% parents are not evaluated,

\end{enumerate}
where $\gamma_1$ and $\gamma_2$  are decreasing functions 
of the population size $\mu$.
\end{informal}

  \begin{figure}[!tbp]
%   \begin{subfigure}[t]{0.49\textwidth}
%     \includegraphics[width=\textwidth]{localplot5to50.pdf}
%     \caption{The leading constant from the second 
% statement of Theorem~\ref{thm-main} versus the population size. The best 
% leading constant achievable by any unary unbiased algorithm is $1$.}
%     \label{fig:loc5to50}
%   \end{subfigure}
%   \hfill
%   \begin{subfigure}[t]{0.49\textwidth}
    \includegraphics[width=.4\textwidth]{}
    \caption{The leading constant from the first 
statement of Corollary~\ref{cor-sbm} versus the population size. The best 
mutation-only variant has a  the leading constant of $e\approx2.71$.}
    \label{fig:sbm5to50}
%   \end{subfigure}
%   \caption{}
  \end{figure}

The above two statements are very general as they provide  upper bounds on the 
expected runtime of the \muga for each value of the population size up to 
$\mu=o(\sqrt{\log{n}})$ and any unbiased mutation operator. 
% $c/n$ with $c$ constant. 
The leading constants $\gamma_1$ and $\gamma_2$ in Statements 1 and 2 are 
plotted respectively in Fig. 
\ref{fig:loc5to50} and \ref{fig:sbm5to50} for different population sizes using 
the  $p_0$, $p_1$, $p_2$ and $c$ values which minimise the 
upper bounds. 
The result is significant particularly for the following three reasons (in order of increasing importance).

\textbf{(1)} The first statement shows how the genetic algorithm 
outperforms any unbiased mutation-only heuristic since the best expected 
runtime achievable by any algorithm belonging to such class is at least
$n \ln n - cn \pm o(n)$ ~\cite{DoerrGecco2016}. Given that the best expected 
runtime achievable with any search heuristic using only standard bit mutation is 
$(1+o(1)) e n \ln{n}$~\cite{WittSTACS}, the second statement shows how by adding 
recombination a speed-up of 60\% is achieved for the \onemax 
problem for any population size up to $\mu=o(\sqrt{\log{n}})$. 
%In particular, the claims disprove the conjecture that genetic algorithms using recombination 
%are in general poor optimisers~\cite{LivnatPapadimitriouCommACM} by providing a 
%natural problem where the GA is faster than if recombination was removed and the 
%algorithm was left to rely on mutation alone. 
%Furthermore the first statement 
%shows that its expected runtime is smaller than that required by stochastic 
%local search to solve any function with unique optimum (i.e., \onemax is the 
%easiest problem with unique optimum for stochastic local 
%search~\cite{Corusetal2016}).
%Overall, a natural problem is provided where genetic algorithms have 
%superior performance to stochastic local search and simulated annealing, as a 
%counter example to the claim that in general ``GAs are far less successful in 
%practice''~\cite{LivnatPapadimitriouCommACM}. 

\textbf{(2)} Very few results are available proving constants in the leading  terms of the expected runtime for randomised 
algorithms due to the considerable technical difficulties in deriving them. 
Exceptions exist such as the analyses of
 \cite{WittSTACS} and 
\cite{DoerrGecco2016} without which our comparative results would not have been 
achievable.  
While such precise results are gaining increasing importance in the 
theoretical computer science community, the available ones are related to more 
simple algorithms. 
%using only mutations. 
This is the first time similar results are achieved 
concerning a much more complicated to analyse standard genetic algorithm using 
realistic population sizes and recombination. %as wished for in the 
%preface of \cite{slsbook}. Thus, this technical achievement 
%is of considerable importance by itself and constitutes the second major contribution of this work.

\textbf{(3)} The preciseness of the analysis allows for the first time an 
appreciation of the surprising importance of the population for optimising 
unimodal functions~\footnote{Populations are traditionally thought to be useful 
for solving multi-modal problems.} as our upper bounds on the expected runtime 
decrease as the population size increases. In particular as the problem size 
increases, so does the optimal size of the population (the best known runtime 
available for the \muga was of  $(1+o(1)) 3/4 e n \ln n$ independent of the 
population size as long as it is greater than $\mu=3$ i.e., there were no 
evident advantages in using a larger population~\cite{CorusOlivetoTEVC}). This 
result is in contrast to all previous analyses of simplified evolutionary 
algorithms for unimodal functions where the algorithmic simplifications, made 
for the purpose of making the analysis more accessible, caused the use of 
populations to be either ineffective or  detrimental~\cite{Witt2006,Sudholthow,DoerrCrossover2018}. 
Our upper bound of  $\mu=o(\sqrt{\log{n}})$ %may naively appear small to 
%researchers outside the evolutionary computation community, it in fact 
is very 
close to the \emph{at most logarithmic} population sizes typically recommended 
for monotone functions to avoid detrimental 
runtimes~\cite{LehrePop2017,Witt2006}. We conjecture that the optimal population 
size is $\Theta(\log n^{1 - \epsilon})$ for any constant $\epsilon>0$,  which 
cannot be proven with our mathematical methods for technical reasons. 

\subsection{Proof Strategy}
Our aim is to provide a precise analysis of the expected runtime of the \muga 
for optimising \onemax with arbitrary problem size $n$. Deriving the exact 
transition probabilities of the algorithm from all possible configurations to 
all others of its population is prohibitive. We will instead 
devise a set of $n$ Markov chains, one for each improvement the algorithm has to 
pessimistically make to reach the global optimum, which will be easier to 
analyse. Then we will prove that the Markov chains are slower to reach their 
absorbing state than the \muGA is in finding the corresponding improvement. 

In essence, our proof strategy consists of: (1) to identify suitable Markov 
chains, (2) to prove that the absorbing times of the Markov chains are larger 
than the expected improving times of the actual algorithm and (3) to bound the 
absorbing times of each Markov chain.

In particular, concerning point (2) we will first define a potential function 
which monotonically increases with the number of copies of the 
genotype with most duplicates in the population and then bound 
the expected change in the potential function at every iteration (i.e., 
\emph{the drift }) from below. Using the maximum value of the potential function 
and the minimum drift, we will bound the expected time until the potential 
function value drops to its minimum value for the first time. This part of the 
analyses is a novel application of drift analysis 
techniques~\cite{lehrewittdrift}. In particular, 
rather than using an explicit distance function as traditionally occurs, we 
define the potential function to be equal to the conditional expected absorption 
time of the corresponding states of each Markov chain.

Concerning point (3) of our proof strategy, we will calculate the absorbing 
times of the Markov chains $M^j$ by identifying their fundamental matrices. This 
requires the inversion of tridiagonal matrices. Similar matrix manipulation 
strategies to bound the runtime of evolutionary algorithms have been previously 
suggested in the literature~\cite{HeYao,OlivetoBookchapter}. However, all 
previous applications showed that the approach could only be applied to prove 
results that could be trivially achieved via simpler standard methods. To the 
best of our knowledge, this is the first time that the power of this long 
abandoned approach has finally been shown by proving non-trivial bounds on the 
expected runtime.

 \section{Main Result Statement}
 \label{sec-main}
 Our main result is the following theorem. The transition probabilities  
$p_{i,k}$ for $(i,k)\in [m]^2$ and $m:=\lceil\mu/2\rceil$ are defined in 
Definition~\ref{def-mark} (Section~\ref{sec-mc}) and are depicted in 
Figure~\ref{fig-mchain}. 
 
 \begin{restatable}{theorem}{mainone}
 
 \label{thm-main}

 The expected runtime for the \muga with $\mu=o(\sqrt{\log{n}})$ using an 
unbiased mutation operator $mutate(x)$ that flips $i$ bits with probability 
$p_i$ with $p_0 \in \Omega(1)$ and $p_1 \in \Omega(1)$ to optimise the \onemax 
function is:

\begin{enumerate}
 
 \item $E[T]\leq (1+o(1))n \ln{n}   \frac{1}{p_{1} +  p_2 \frac{ 2(1-\xi_{2}) 
\mu}{(\mu+1)}     }$ if the quality of each offspring is evaluated, 
 
 \item $E[T]\leq (1+o(1))n \ln{n}   \frac{(1-p_{0}) }{p_{1} +  p_2 \frac{ 
2(1-\xi_{2}) \mu}{(\mu+1)}     }$ if the quality of offspring identical to their 
parents is not evaluated for their quality is known; and 
\end{enumerate}
\begin{align*}
&\xi_{i}=\frac{p_{i-1,i-2}}{p_{i-1,m}+p_{i-1,i-2} + p_{i-1,i}(1-\xi_{i+1})} ,
&\xi_{m}=\frac{p_{m-1,m-2}}{p_{m-1,m}+p_{m-1,m-2}}.
\end{align*}

\end{restatable}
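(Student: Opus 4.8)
The plan is to realise the three-step strategy outlined above. I would begin with a fitness-level decomposition. Since \onemax is unimodal and the \muga never lowers its best fitness, the total runtime is at most the sum, over the $n$ improvements the best individual must pessimistically make, of the expected number of generations spent at each level before the best fitness strictly increases. The essential difficulty that distinguishes this from a textbook coupon-collector argument is that the per-level improvement probability is not a fixed function of the distance to the optimum: right after an improvement the new best genotype is present in a single copy, and over subsequent generations duplicates of it (and of genotypes one improving flip away) spread through the population, which changes how likely crossover followed by mutation is to produce the next improvement. To avoid tracking the full, exponentially large configuration space, I would introduce for each target improvement $j$ a small Markov chain $M^j$ whose transient states $i \in [m]$, $m = \lceil \mu/2 \rceil$, encode the relevant coarse population composition during this takeover phase, with a single absorbing state entered exactly when the best fitness improves; the transition probabilities $p_{i,k}$ are built to be pessimistic, i.e.\ to under-count the ways the real algorithm can improve.

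Second, I would prove that $M^j$ is genuinely slower to absorb than the \muga is to improve, which is the step carrying the paper's novelty. Instead of inventing an explicit distance function, I would let the potential of a population be the conditional expected absorption time of the $M^j$-state it maps to; this potential is largest when the population is dominated by duplicates of one best genotype (then crossover of two parents cannot recombine anything new) and it decreases as the population becomes able to support a two-bit gain. I would then bound from below the one-generation drift of this potential under the true algorithm by a case analysis over the selection, uniform crossover and mutation steps, using that the offspring flips $0,1,2$ bits with probabilities $p_0,p_1,p_2$ and accounting for the worst-case removal step. Because every transition that advances $M^j$ is realised by the \muga with at least the modelled probability, the drift is at least that of a single step of $M^j$, and additive drift (using the maximum potential value and the minimum drift) bounds the expected time for the potential to reach its minimum, i.e.\ for the improvement to occur. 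The quantities $\xi_i$ enter here as the probabilities that the takeover ``collapses'' back towards an unfavourable composition before the improvement is found.

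Third, I would compute the absorption times of each $M^j$ explicitly through its fundamental matrix $N = (I-Q)^{-1}$, where $Q$ collects the transient-to-transient transition probabilities. Since a state records a one-dimensional takeover index with transitions only to neighbouring states and a reset column, $I-Q$ is essentially tridiagonal, and inverting it reduces to a backward recurrence whose solution is precisely the continued-fraction expression defining $\xi_i$, terminating in the boundary case $\xi_m$. Feeding the resulting per-level improvement probabilities back into the fitness-level sum produces the harmonic series that yields the $(1+o(1))\,n\ln n$ leading factor, with the effective improvement rate $p_1 + p_2\,\frac{2(1-\xi_2)\mu}{\mu+1}$ appearing in the denominator; the first statement follows. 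The second statement is then obtained by discarding the generations in which the mutated offspring coincides with one of its parents and hence needs no evaluation; accounting for these multiplies the bound by the factor $(1-p_0)$.

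I expect the principal obstacle to be the domination argument of the second step. The real process is a Markov chain over all population configurations at level $j$, with correlations across bit positions and across the $\mu$ individuals, whereas $M^j$ is a deliberately small surrogate; the coupling must hold uniformly over \emph{every} configuration the algorithm can occupy and must survive the $(1+o(1))$ approximations. This is exactly where the hypotheses $\mu = o(\sqrt{\log n})$ and $p_0,p_1 \in \Omega(1)$ become indispensable: the former keeps the probability of several simultaneous ``bad'' events (for example, losing many duplicates at once, or multi-bit interference) negligible relative to the single-step drift, so that the low-dimensional chain remains faithful. A secondary but still delicate point is carrying out the tridiagonal inversion carefully enough that the absorption times telescope into the compact $\xi_i$ recurrence rather than an unmanageable closed sum.
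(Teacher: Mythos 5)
Your proposal follows essentially the same route as the paper: a fitness-level decomposition, a per-level surrogate Markov chain on the population's diversity with pessimistic transition probabilities, a potential function defined as the conditional expected absorption time combined with additive drift to show $E[T^j]\leq(1+o(1))E[T_0^j]$, inversion of the tridiagonal fundamental matrix to obtain the continued-fraction $\xi_i$ recurrence, and the $(1-p_0)$ factor for unevaluated clonal offspring. The only small inaccuracy is your reading of the hypothesis $\mu=o(\sqrt{\log n})$: in the paper it arises because the matrix-inversion bound carries an additive $\bo{\mu^2}$ error per level that must be $o(\log n)$, not from controlling simultaneous bad events in the coupling (for which the weaker $\mu=o(\log n/\log\log n)$ suffices).
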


% The $\xi$ terms in the runtime are analogous to the conditional probability that 
% given that the process is in state $i$, it will return to state $i-1$ before 
% reaching the absorbing state. 
The recombination operator of the GA is effective only if individuals with 
different genotypes are picked as parents (\emph{i.e.}, recombination cannot 
produce any improvements if two identical individuals are recombined). 
However, more often than not, the population of the \muga consists only 
of copies of a single individual. 
 When diversity is created via mutation  (\emph{i.e.}, a new genotype is added to the 
population), it either quickly leads to an improvement or it quickly 
disappears. The bound on the runtime reflects this behaviour as it is simply a 
waiting time until one of two event happens; either the current individual is 
mutated to a better one 
% {\color{yellow}(with probability $p_{0,m}$)}
or diversity emerges 
% {\color{yellow}(with probability $p_{0,1}$)}
and leads to an improvement before it is lost.
% {\color{yellow}(with probability $1-\xi_2$)}

The  $\xi_2$ term in the runtime is the conditional probability that once 
diversity is created by mutation, it will be lost before reaching the next 
fitness level (an improvement). Naturally, $(1-\xi_2)$ is the 
probability that a successful 
crossover will occur before losing diversity. The $(1-\xi_2)$ 
factor increases with the population size $\mu$, which implies that larger 
populations have a higher capacity to maintain diversity long enough to be 
exploited by the recombination operator. 
% {\color{yellow}Once a different genotype is 
% created, the probabilities of increasing or decreasing diversity, 
% %the number of individuals with this minority genotype, 
% as well as of finding an improved solution are in the order of $\Omega(1/\mu)$. 
%  Thus, for reasonably small $\mu$, most of the optimisation time is 
% actually spent without diversity, which is similar to that only a 
% very small fraction of newly sampled solutions by a randomised local search 
% algorithm actually improves upon the current best.} 
%The bound on the runtime 
%{\color{yellow}reflects this behaviour as it} is simply a waiting time until one of two event 
%happens; either the current individual is mutated to a better one {\color{yellow}(with 
%probability $p_{0,m}$)} or diversity emerges {\color{yellow}(with probability $p_{0,1}$)} and 
%leads to an improvement before it is lost (with probability $1-\xi_2$). 

Note that setting $p_i:=0$ for all $i>2$ minimises the upper bound on the 
expected runtime in the second statement of Theorem~\ref{thm-main} and reduces 
the bound to: $E[T]\leq (1+o(1))n \ln{n }(p_1 + p_2)/\left(p_{1}  +  p_{2}  
\frac{2\mu(1-\xi_{2})}{\mu+1} \right) $. Now, we can see the critical role that 
$\xi^*(\mu)=(1-\xi_2)\mu/(\mu+1)$ plays in the expected runtime. For any 
population size 
which yields $\xi^{*}(\mu)\leq 1/2$, flipping only one bit per mutation  becomes 
advantageous. The best upper bound achievable from the above expression is then 
$(1+o(1))n \ln{n}$ by assigning an arbitrarily small constant to  $p_0$ and  
$p_1=1-p_0$. As long as $p_0=\Omega(1)$, when an improvement occurs, the 
superior 
genotype takes over the population quickly relative to the time between 
improvements. Since there are only one-bit flips, the crossover operator 
becomes virtually useless (i.e., crossover requires a Hamming distance of 2 
between parents to create an improving offspring) and the resulting algorithm is 
a stochastic local search algorithm with a population. However, when $\xi^*(\mu)>1/2$ 
setting $p_2$ as large as possible provides the best upper bound. The  
transition for $\xi^*(\mu)$ happens between population sizes of 4 and 5. For 
populations larger than 5, by setting $p_1:=\epsilon/2$ and $p_0=:\epsilon/2$ to 
an arbitrarily small constant 
$\epsilon$ and setting $p_2=1-\epsilon$, we get the upper bound $  E[T]\leq 
(1+o(1))(1+\epsilon)n 
\ln{n}\left(\mu+1\right)/\left(2\mu\left(1-\xi_{2}\right)\right)$, which is 
plotted for different population sizes in Figure~\ref{fig:loc5to50}. 
A direct corollary to the main result is the upper bound for 
the classical \muga commonly used in evolutionary computation which 
applies standard bit mutation with mutation rate $c/n$ for which 
$p_0=(1-o(1))/e^c$, $p_1=(1-o(1))c/e^c$ and $p_2=(1-o(1))c^2/(2 e^c)$.

\begin{corollary}

\label{cor-sbm}

Let $\xi_2$ be as defined in Theorem~\ref{thm-main}. The expected runtime for 
the \muga with $\mu=o(\sqrt{\log{n}})$ using standard bit-mutation with 
mutation 
rate $c/n$, $c=\Theta(1)$ to optimise the \onemax function is:

\begin{enumerate}
 
 \item $E[T]\leq (1+o(1))n \ln{n}   \frac{e^c}{c+ \frac{ c^2 \mu}{(\mu+1)}    
(1-\xi_{2})}$ if the quality of each offspring is evaluated, 
 
 \item $E[T]\leq (1+o(1))n \ln{n}   \frac{(1-e^{-c})e^c}{c+ \frac{ c^2 
\mu}{(\mu+1)}    (1-\xi_{2})}$ if the quality of offspring identical to their 
parents is not evaluated for their quality is known;

\end{enumerate}

\end{corollary}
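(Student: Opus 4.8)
The plan is to obtain Corollary~\ref{cor-sbm} as a direct specialisation of Theorem~\ref{thm-main}. Standard bit mutation with rate $c/n$ is an unbiased unary operator, so the theorem applies once its bit-flip probabilities are known; the whole task reduces to substituting these probabilities into the two generic bounds and simplifying. First I would compute $p_0$, $p_1$ and $p_2$. The number of flipped bits is distributed as $\mathrm{Bin}(n,c/n)$, hence
\[
p_0 = \left(1-\frac{c}{n}\right)^{n}, \qquad p_1 = c\left(1-\frac{c}{n}\right)^{n-1}, \qquad p_2 = \binom{n}{2}\frac{c^2}{n^2}\left(1-\frac{c}{n}\right)^{n-2},
\]
and letting $n\to\infty$ gives $p_0=(1-o(1))e^{-c}$, $p_1=(1-o(1))c\,e^{-c}$ and $p_2=(1-o(1))\tfrac{c^2}{2}e^{-c}$, exactly the values quoted just before the corollary.

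Before substituting I would check that Theorem~\ref{thm-main} is applicable. Since $c=\Theta(1)$ we have $e^{c}=\Theta(1)$, so both $p_0$ and $p_1$ are $\Theta(1)=\Omega(1)$ as the theorem requires, and the restriction $\mu=o(\sqrt{\log n})$ is carried over unchanged. The quantity $\xi_2$ is a function of the chain transition probabilities defined in Theorem~\ref{thm-main}, so I simply keep it symbolic rather than re-deriving it.

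The remaining work is purely algebraic. For the first statement I would substitute $p_1$ and $p_2$ into the denominator $p_1 + p_2\tfrac{2(1-\xi_2)\mu}{\mu+1}$ and factor out the common $e^{-c}$, giving
\[
p_1 + p_2\,\frac{2(1-\xi_2)\mu}{\mu+1} = (1-o(1))\,e^{-c}\left(c + \frac{c^2\mu}{\mu+1}(1-\xi_2)\right),
\]
whose reciprocal is the claimed leading factor $e^{c}/\!\left(c + \tfrac{c^2\mu}{\mu+1}(1-\xi_2)\right)$. For the second statement the numerator becomes $1-p_0 = 1-(1-o(1))e^{-c} = (1-o(1))(1-e^{-c})$, using that $1-e^{-c}$ is a positive constant; dividing this by the same denominator produces the factor $(1-e^{-c})e^{c}/\!\left(c+\tfrac{c^2\mu}{\mu+1}(1-\xi_2)\right)$.

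The argument is short because all the difficulty resides in Theorem~\ref{thm-main}; the only delicate point is the treatment of the $(1\pm o(1))$ factors. As $c=\Theta(1)$, every quantity that appears ($e^{\pm c}$, $c$, $c^2$ and $1-e^{-c}$) is bounded away from $0$ and $\infty$, so the individual multiplicative error terms in the $p_i$ can be collapsed into a single $(1+o(1))$ prefactor without affecting the leading constant. I do not anticipate any genuine obstacle beyond this bookkeeping and the verification that the $\Omega(1)$ conditions on $p_0$ and $p_1$ hold.
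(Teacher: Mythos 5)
Your proposal is correct and matches the paper's (implicit) argument exactly: the paper treats the corollary as a direct substitution of $p_0=(1-o(1))e^{-c}$, $p_1=(1-o(1))c\,e^{-c}$, $p_2=(1-o(1))\tfrac{c^2}{2}e^{-c}$ into Theorem~\ref{thm-main}, factoring out $e^{-c}$ from the denominator, which is precisely what you do. Your additional checks of the $\Omega(1)$ hypotheses and the collapsing of the $(1\pm o(1))$ factors are the right bookkeeping and raise no issues.
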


By calculating $\xi^{*}(\mu):=(1-\xi_2)\mu/(\mu+1)$ for fixed values of $\mu$ we can 
determine values of $c$ (i.e., mutation rate) which minimise the leading 
constant of the runtime bound in Corollary~\ref{cor-sbm}. In 
Figure~\ref{fig:sbm5to50} we plot the leading constants in the first 
statement, minimised by picking the appropriate $c$ values for $\mu$ ranging 
from 5 to 50. All the values presented improve upon the  upper bound on the 
runtime of $1.96 n \ln{n}$ given in \cite{CorusOlivetoTEVC} for any $\mu \geq 
3$ 
and $\mu = o(\log{n}/\log{\log{n}})$. All the upper bounds are less than $1.7 n 
\ln{n}$ and clearly decrease with the population size, signifying an at least 
$60\%$ increase in speed compared to the $e n 
\ln{n}\left(1-o\left(1\right)\right)$ lower bound for the same algorithm 
without the recombination operator.
% {\color{yellow} This is the first time that for a 
% monotonic function, allowing the population size to increase with the problem 
% size yields better upper bounds. On monotonic problems populations have often 
% been considered to be detrimental because it takes longer before the current 
% best solution is selected to be improved. As a result, constant or at most 
% logarithmic population sizes (similar to our $o(\sqrt{\log{n}})$ restriction) 
% are recommended to avoid increasing the runtime asymptotically \cite{Witt2006}.}

Considering the leading constants in the second statement of 
Corollary~\ref{cor-sbm}, for all population sizes larger than 5, the upper bound 
for the optimal mutation rate is smaller than the theoretical 
lower bound on the runtime of unary unbiased 
black-box algorithms. For population sizes of 3 and 4,   $\xi^{*}=1/3$ and the 
expression to be minimised is $(1-e^{-c}) e^c/(c+ c^2/3)$. For $c>0$, this 
expression has no minimum and is always larger than one. Thus, at least with our 
technique, a population of size 5 or larger is necessary to prove that the \muga 
outperforms stochastic local search  and any other unary unbiased 
optimisation heuristic.

\section{Analysis}
 
Our main aim is to provide an upper bound on the expected runtime ($E[T]$) of 
the \muGA defined in Algorithm~\ref{alg:mu+1-GA} to maximise the \onemax 
function.
% using standard bit-mutation with $p=c/n$ for some constant $c$ in Step 6.
W.l.o.g. we will assume that the target string $z$ of the $\onemaxz$ function to 
be identified is the bitstring of all 1-bits since all the operators in the \muGA 
are invariant to the bit-value (have no bias towards $0$s or $1$s). We will 
provide upper bounds on the expected value $E[T^j]$, where $T^j$ is the time 
until an individual with at least $j+1$ 1-bits is sampled for the first time 
given that the initial population consists of individuals with $j$ 1-bits (\emph{i.e.}, the population is at level $j$).  
Then, by summing up the values of $E[T^j]$ and the expected times for the whole 
population to reach $j+1$ 1-bits for $j\in\{1\ldots,,n-1\}$ we achieve a valid 
upper bound on the runtime of the \muGA. Similarly to the analysis in 
\cite{CorusOlivetoTEVC}, we will pessimistically assume that the algorithm is 
initialised with all individuals having just 0-bits, and that throughout the 
optimisation process at most one extra 1-bit is discovered at a time. 

% {\color{yellow}Since deriving the exact transition probabilities of the algorithm 
% from all possible configurations to all possible configurations of its 
% population is prohibitive,} 
We will devise a Markov chain $M^j$ for each $j\in \{0, \ldots,n -1 \}$ 
for which we can analyse the expected absorbing time $E[T_{i}^{j}]$ starting 
from state $S_{i}^{j}$. We will then prove that it is in expectation slower in 
reaching its absorbing state than the \muGA is in finding an improvement given 
an initial population at level $j$. 
In particular, we will define a non-negative potential function 
on the domain of all possible configurations of a population at level $j$ or 
above. For any configuration at level $j$, we will refer to the genotype with 
the most copies in the population as the {\it majority genotype} and define the 
{\it diversity} of a population  as the number of non-majority individuals in 
the population. Our potential function will be monotonically decreasing with 
the diversity. Moreover, we will assign the potential function  a value of zero for 
all populations with at least one solution which has more than $j$ 1-bits.
% 
% Moreover, we call ``majority'' individuals the largest subset of the population 
%  who all have the same genotype.
% The potential function will be
Then, we will bound the expected change in the potential function at every 
iteration (i.e., \emph{the drift}) from below. Using the maximum value of the 
potential function and the minimum drift, we will derive a bound on 
% {\color{yellow}the expected time until the 
% potential function value drops to zero for the first time, which is also a 
% bound on}
the expected time until an improvement is found starting from a 
population at level $j$ with no diversity (\emph{i.e.}, all the solutions in the 
population are identical). While this 
upper bound will not provide an explicit runtime as a function of the problem 
size, it will allow us to conclude that the $E[T_{0}^{j}]\geq E[T^{j}]$. Thus, 
all that remains will be to bound the expected absorbing time of $M^j$ 
initialised at state $S_{0}^{j}$. We will obtain this bound by  identifying 
the fundamental matrix of $M^j$. After establishing that the inverse of the fundamental 
matrix is a \emph{strongly diagonally dominant tridiagonal matrix}, we will make 
use of existing tools in the literature for inverting such 
matrices and complete our proof.
%
%
%Our main result is an upper bound on the expected value of $T$, the runtime of 
%Algorithm~\ref{alg:mu+1-GA} to maximise the \onemax function using standard 
%bit-mutation with $p=c/n$ for some constant $c$ at step six. Our proof will 
%rely on using Markov chains $M^j$ for $j\in\{1\ldots,,n-1\}$ as pessimistic 
%models of the actual Algorithm~\ref{alg:mu+1-GA}'s behaviour when its population 
%consists of individuals with $j$ 1-bits. We will first prove an upper bound on 
%the expected value of $T^j$, the time until an individual with $j+1$ 1-bits is 
%sampled for the first time given that the initial population consists of 
%individuals with $j$ 1-bits. To achieve this result, we will first define the 
%surrogate Markov chain $M^j$ with absorbing times that are larger than 
%$E[T^{j}]$ and then bound these absorbing times. 
%
% 
% Having described the surrogate Markov chains, we can now state our first main 
% result. 
%
\subsection{Markov Chain Definition}
\label{sec-mc}
 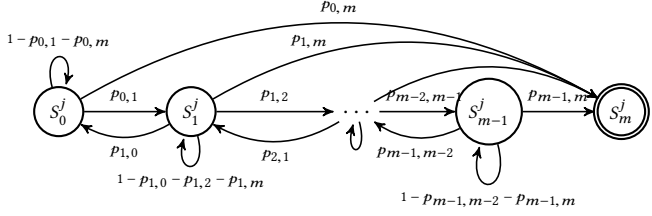
\begin{figure}\caption{The topology of Markov Chain $M^j$. } 
\label{fig-mchain}
  \begin{center}
    \begin{tikzpicture}[->, >=stealth', auto, semithick, node distance=2.2cm]
      \tikzstyle{every state}=[fill=white,draw=black,thick,text=black,scale=.8]

	  \node[state]    (B){$S^{j}_{0}$};
	  \node[state]    (C)[right of=B]   {$S^{j}_{1}$};
	  \node  (D)[right of=C]   {$\ldots$};
	  \node[state]    (E)[right of=D]   {$S^{j}_{m-1}$};
	  \node[state, double]    (F)[right of=E]   {$S^{j}_{m}$};
	  
	  \path
	    (B) edge[loop above]     	node{\tiny$1-p_{0,1}-p_{0,m}$}         
    (B);
	  \path
	    (B) edge[ left, above]	node{\tiny$p_{0,1}$}         (C);
	  \path
	    (C) edge[loop below]		
node{\tiny$1-p_{1,0}-p_{1,2}-p_{1,m}$}  		(C);
	  \path
	    (C) edge[bend left, below]     node{\tiny$p_{1,0}$}         (B);
	  \path
	    (C) edge[right, above]     node{\tiny$p_{1,2}$}         (D);
	    \path
	    (D) edge[right, above]     node[pos=0.6]{\tiny$p_{m-2,m-1}$}       
  (E);
	    \path
	    (E) edge[bend left, below]     node{\tiny$p_{m-1,m-2}$}         
(D);
	    \path
	    (E) edge[right, above]     node{\tiny$p_{m-1,m}$}         (F);
	     \path
	    (E) edge[loop below]		
node{\tiny$1-p_{m-1,m-2}-p_{m-1,m}$}  		(E);
	    \path
	    (D) edge[loop below]     node{}         (D);
	  \path
	    (D) edge[bend left, below]     node{\tiny$p_{2,1}$}         (C);
	  \path
	    (B) edge[bend left, above]     node{\tiny$p_{0,m}$}         (F);
	  \path
	    (D) edge[bend left, below]     node{}         
(F);
	  \path
	    (C) edge[bend left, above]     
node[pos=0.25]{\tiny$p_{1,m}$}         (F);
  
    \end{tikzpicture}
  \end{center}
\end{figure}
In this subsection we will present the Markov chains which we will use to 
model the behaviour of the \muga. Each Markov chain $M^j$ has 
$m:=\lceil\mu/2\rceil$ transient states ($S^{j}_{0}, S^{j}_{1},\ldots, S^{j}_{m 
-1}$) and one absorbing state 
($S^{j}_{m}$ ) with the topology depicted in Figure~\ref{fig-mchain}. 
The states $S_{i}^j$ represent the amount of diversity in the 
population. 
Hence, eg., $S_1^j$ refers to a 
population where all the individuals have $j$ 1-bits and all  but one of them 
share the same genotype, while $S_{m-1}^j$ refers to a population where at most 
$m-1 = \lceil \mu/2 \rceil +1$ individuals are identical. 
Compared to the analysis presented in \cite{CorusOlivetoTEVC} that used Markov 
chains of only three states (i.e., no diversity, diversity, increase in 
1-bits), $M^j$ allows to control the diversity in the population more 
precisely, thus to show that larger populations are beneficial to the 
optimisation process.

\begin{definition} \label{def-mark}
Let $M_{j}$ be a Markov chain with $m:=\lceil\mu/2\rceil$ transient states 
($S^{j}_{0}, 
S^{j}_{1},\ldots, S^{j}_{m -1}$) and one absorbing state ($S^{j}_{m}$ ) with 
transition probabilities $p_{i,k}$ from state $S^{j}_{i}$ to state $S^{j}_{k}$ 
as follows:
\begin{align*}
p_{0,1}&:= \frac{\mu}{(\mu+1)} \frac{2 j (n-j) p_2   }{n^2  }, \quad
p_{0,m}:=\frac{ (n-j)p_1}{n  },\\ 
\quad p_{i, m}& :=2 \frac{i}{\mu} 
\frac{\mu-i}{\mu} 
\frac{p_0}{4} \quad \text{ if } i>0, 
\end{align*}
\begin{align*}
p_{1,2}&:= p_{0} \left( \left(\frac{1}{\mu}\right)^2 
\frac{\mu-1}{\mu+1} 
+2 \frac{1}{\mu} \frac{\mu-1}{\mu}\frac{1}{4}  \frac{\mu-1}{\mu+1}            
\right), \\
p_{1,0}&:=p_{0} \left( \left(\frac{\mu-1}{\mu}\right)^2 +2 \frac{1}{\mu} 
\frac{\mu-1}{\mu}\frac{1}{4} \right) \frac{1}{\mu+1}, \\
p_{i,i+1}&:= p_{0} \left( \left(\frac{i}{\mu}\right)^2 \min{ 
\left(\frac{\mu-i}{\mu+1}, 1/4\right)} +2 \frac{i}{\mu} 
\frac{\mu-i}{\mu}\frac{1}{4}  \frac{\mu-i}{\mu+1}\right) \quad \\
\text{ if }& 
m-1>i>1,
\\
p_{i,i-1}&:=p_{0} \left( \left(\frac{\mu-i}{\mu}\right)^2 +2 
\frac{i}{\mu} \frac{\mu-i}{\mu}\frac{1}{4} + \left(\frac{i}{\mu}\right)^2 
\frac{1}{16}\right) \frac{i}{\mu+1}\\
  \quad \text{ if } &
m>i>1, \\
p_{m,m}&:=1, \quad p_{0,0}:= 1-p_{0,1}-p_{0,m},\\
% 
% p_{0,0}&:= 1-p_{0,1}-p_{0,m} \\
% 
p_{m-1, m-1}&:= 1-p_{m-1,m-2}-p_{m-1,m}, \\
p_{i, i}&:= 1-p_{i,i-1}-p_{i,i+1}-p_{i,m} \quad \text{ if } 0<i<m-1, \\
p_{i,j}&:=0 \quad otherwise.
\end{align*}
\end{definition}

% The Markov chain $M^j$ is fully characterised by the transition probabilities 
% between the states. 
% Whilst we ommit the exact expressions due to space 
% limitations (See Definition~\ref{def-mark}), 
Now, we will point out the important characteristics of these transition 
probabilities. The transition probabilities, $p_{i,k}$, are set to be equal to 
provable bounds on the probabilities of the \muga with a population consisting 
of solutions with $j$ bits of gaining/losing diversity ($p_{i,i+1}$/$p_{i,i-1}$) 
and sampling a solution with more than $j$ 1-bits ($p_{i,m}$). In particular, 
upper bounds are used for the transition probabilities $p_{i,k}$ where $i<k$ and 
lower bounds are used for the transition probabilities $p_{i,k}$ where $i>k$. 
Note that greater diversity corresponds to a higher probability of two distinct 
individuals with $j$ 1-bits being selected as parents and improved via 
recombination (i.e., $p_{i,m}$  monotonically increases with $i$  and  
recombination is ineffective if $i=0$ and the improvement probability $p_{0,m}$ 
is simply the probability of increasing the number of 1-bits by mutation only. 
Thus, $p_{0,m}=\Theta(j/n)$ while $p_{i,m}=\Theta(i(\mu-i)/\mu^2)$ when $i>0$.  
The first forward transition probability $p_{0,1}$ denotes the probability of 
the mutation operator of creating a different individual with $j$ 1-bits and the 
selection operator removing one of the majority individuals from the population. 
The other transition probabilities, $p_{i,i+1}$ and $p_{i,i-1}$ bound the 
probability that a copy of the minority solution or the majority solution is 
added to the population and that a member of the other species 
(minority/majority) is removed in the subsequent selection phase. All transition 
probabilities except $p_{0,1}$ and $p_{0,m}$ are independent of $j$ and referred 
to in the theorem statements without specifying $j$.

\subsection{Validity of the Markov Chain Model}

In this subsection we will present how we establish that $M^j$ is a pessimistic 
representation of Algorithm~\ref{alg:mu+1-GA} initialised with a population 
of $\mu$ identical individuals at level $j$. In particular, we will show that 
$E[T^{j}_0]$, the expected absorbing time starting from state $S_{0}^j$ is 
larger than $E[T^j]$. This result is formalised in the following lemma.

\begin{lemma}
\label{lem-relate}
Let $E[T]$ be the expected runtime until the \muga with 
$\mu=o(\log{n}/\log{\log{n}})$ optimises the \onemax function and   
$E[T_i^{j}]$ 
(or $E[T_i]$ wherever $j$ is obvious)  is the expected 
absorbing time of $M^j$ starting from state $S^{j}_{i}$. Then, $E[T] \leq o(n 
\log{n}) + (1+o(1))\sum_{j=0}^{n-1} E[T_{0}^{j}]$. 
\end{lemma}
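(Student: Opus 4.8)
The plan is to combine a level-based decomposition of the runtime with an additive drift comparison between the true \muga and the chain $M^j$. First I would decompose $T$ over the $n$ fitness levels: pessimistically assuming (as in~\cite{CorusOlivetoTEVC}) that the population starts with all $\mu$ individuals at level $0$ and advances one level at a time, I would write the runtime as a sum over $j$ of the improving time $T^j$ (the time to sample the first individual with $j+1$ one-bits starting from a no-diversity population at level $j$) plus a takeover time $\tau^j$ needed for the new superior genotype to fill the population and re-establish a no-diversity configuration at level $j+1$. Since improved individuals have strictly larger fitness they can never be removed by the elitist selection, so takeover is monotone and a standard argument bounds $E[\tau^j] = O(\mu\log\mu)$; summing over all levels with $\mu = o(\log n/\log\log n)$ yields the $o(n\log n)$ term. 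It then remains to prove the per-level inequality $E[T^j] \leq (1+o(1))E[T_0^j]$ and sum it.

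The heart of the argument is this per-level bound, which I would obtain by additive drift analysis~\cite{lehrewittdrift} using the potential $g(C):=E[T_{i(C)}^j]$, where $i(C)\in\{0,\dots,m\}$ is the diversity index of a configuration $C$ (the number of non-majority individuals), with an improved population mapped to the absorbing index $m$ so that $g=0$. By the defining recurrence of absorption times the chain satisfies $E[T_i^j]=1+\sum_k p_{i,k}E[T_k^j]$, so the chain's own one-step drift equals exactly $1$. Writing the true transition probabilities of the \muga as $q_{i,k}$, the actual drift at index $i$ is $1+\sum_k(p_{i,k}-q_{i,k})E[T_k^j]$. Using $E[T_m^j]=0$ and $\sum_k p_{i,k}=\sum_k q_{i,k}=1$, I would regroup the correction as $a\,(E[T_{i-1}^j]-E[T_i^j])+c\,(E[T_{i+1}^j]-E[T_i^j])-d\,E[T_i^j]$, where $a,c,d$ are the signed discrepancies on the backward, forward and improving transitions. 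Since $M^j$ is defined (Definition~\ref{def-mark}) with upper bounds on the backward probabilities ($a\geq 0$) and lower bounds on the forward and improving ones ($c\leq 0$, $d\leq 0$), and since $E[T_i^j]$ is decreasing in $i$, each of the three terms is non-negative, so the true drift is at least $1-o(1)$ (the $o(1)$ slack coming from the $\Theta(1)$-type approximations in Definition~\ref{def-mark}, e.g.\ $n^2$ for $\binom{n}{2}$ and the neglect of higher-order bit flips). Additive drift with maximum potential $E[T_0^j]$ then gives $E[T^j]\leq(1+o(1))E[T_0^j]$.

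Two supporting facts feed this regrouping and deserve their own lemmas. The first is the monotonicity $E[T_0^j]\geq E[T_1^j]\geq\cdots\geq E[T_{m-1}^j]$, which guarantees both that $E[T_0^j]$ is the maximum potential and that the bracketed differences above have the claimed signs; I would read this off the birth--death topology of $M^j$ in Figure~\ref{fig-mchain} (more diversity strictly raises the improvement probability $p_{i,m}$, so it strictly lowers the expected absorption time). The second is the validity of the transition bounds themselves: that the closed forms of Definition~\ref{def-mark} are genuine lower bounds on the true forward and improving probabilities and upper bounds on the true backward probabilities of the \muga at level $j$.

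The main obstacle is precisely this last verification, because $M^j$ collapses every level-$j$ configuration onto a single diversity index while the real population is an arbitrary multiset that may contain several distinct minority genotypes. I would therefore have to bound, \emph{uniformly} over all such configurations, the probabilities that the next generation increases the number of non-majority individuals, decreases it, or produces an improvement, accounting for every parent pairing (majority--majority, majority--minority, and minority--minority with possibly differing genotypes), the uniform-crossover recombination, the mutation distribution $(p_0,p_1,p_2,\dots)$, and the random tie-breaking in the removal step. Establishing that these combinatorial bounds match Definition~\ref{def-mark} in the correct direction---in particular that recombining two distinct minority genotypes can only help reach the absorbing state, and that the boundary states $S_0$ and $S_{m-1}$ behave no better than the chain---is where the real work lies; once the bounds are in place, the drift bookkeeping and the summation over levels are essentially mechanical.
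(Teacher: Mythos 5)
Your proposal follows essentially the same route as the paper: a level decomposition with $O(\mu\log\mu)$ takeover time per level, the potential $g=E[T^j_{i(\cdot)}]$ equal to the chain's conditional absorption times, the chain's defining recurrence giving unit drift, the monotonicity of $E[T^j_i]$ in $i$, and the additive drift theorem; your signed-discrepancy regrouping is precisely the term-by-term comparison the paper carries out by conditioning on the four parent-selection events and on the Hamming distances involved. You have also correctly located where the real work lies (the uniform verification of the transition bounds over all level-$j$ configurations); the only detail to add is that the potential must cap the diversity index at $\lceil\mu/2\rceil-1$, since the true number of non-majority individuals can exceed the number of transient states of $M^j$.
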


We will use drift analysis~\cite{lehrewittdrift}, a frequently used tool in the 
runtime analysis of evolutionary algorithms, to prove the above result.  We 
will 
start by defining a 
potential function over the state space of Alg.~\ref{alg:mu+1-GA} that maps 
states to the expected absorbing times of $M^j$. The minimum of the potential 
function will correspond to the state of Algorithm~\ref{alg:mu+1-GA} which 
has sampled a solution with more than $j$ 1-bits and we will explicitly prove 
that the maximum of the potential function is $E[T^{j}_0]$. Then, we will show 
that the drift, \emph{i.e.},
the expected decrease in the potential function value in a single time 
unit (from time $t$ to $t+1$), is at least one. 
Using the maximum value of the potential function and the minimum drift, 
we will bound the runtime of the algorithm by the absorbing time of the Markov chain.

We will define our potential function over the domain of all possible 
population diversities at level $j$. We will refer to the genotype 
with the most copies in 
the population as the {\it majority genotype} and recall that the {\it 
diversity}, $D_t \in \{ 
0, \ldots, \mu-1\}$, of a population $P_t$ is defined as the number of 
non-majority individuals in the population. 
% {\color{yellow}The Markov chain $M^j$ and the process $D_t$ are 
% similar to a random walk in the sense that the transient states  cannot access 
% non-adjacent transient states ($ D_t -1 \leq D_{t+1}\leq D_t +1 $) while they 
% differ from a simple random walk by allowing transitions from any transient 
% state to the absorbing state ($S^{j}_{m}$).}

\begin{definition}\label{def-potential}
The potential 
function value for level $j$, $g^{j}$ (or $g$ wherever $j$ is obvious),  is defined as follows:
\[g^{j}(D_t)=g^{j}_{t}=
  \begin{cases} 
      E[T_{D_t}] &  0\leq D_t < \lceil\mu/2\rceil-1 \\
      E[T_{\lceil\mu/2\rceil-1 }] &  \lceil\mu/2\rceil-1 \leq D_t < \mu-1 \\
      0 & \exists x\in P_t \quad s.t. \quad \onemax(x)>j 
  \end{cases}
\]
where $E[T_i^j]$ (denoted as $E[T_i]$ wherever $j$ is obvious) is the 
expected absorbing time of the Markov chain $M^j$ starting from state 
$S^{j}_{i}$.
\end{definition}
The absorbing state of the Markov chain corresponds to a population with at 
least one individual with more than $j$ 1-bits, thus having  potential function 
value (and  expected absorbing time) equal to zero. The state $S_{0}^j$ corresponds to a population with no diversity.  The 
following lemma formalises that the expected absorbing time gets larger as the 
initial states get further away from $\lceil\mu/2\rceil-1$. This property 
implies that the expected absorbing time from state $S_{0}^j$  constitutes an 
upper bound for the potential function $g^j$. 
\begin{lemma} \label{lem-potbound}
 Let $E[T^{j}_i]$ be the expected absorbing time of Markov chain $M^j$ 
conditional on its initial state being $S_{i}^{j}$. 
Then,  $\forall j \in \{0,\ldots,n-1\}$, $E[T^{j}_i] \geq E[T^{j}_{i-1}]$ for all $ 1 \leq i \leq \lceil\mu/2\rceil$ and $g^{j}_{t}\leq E[T^{j}_{0}]$ $\quad \forall t>0 $. 
\end{lemma}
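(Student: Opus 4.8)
The plan is to deduce the second conclusion from an ordering of the absorbing times, and to obtain that ordering by first-passage analysis together with a monotone coupling. I read the printed inequality in the orientation $E[T^{j}_{i-1}]\ge E[T^{j}_{i}]$: this is the direction that makes $E[T^{j}_{0}]$ the \emph{largest} of the transient absorbing times, it is exactly what the companion conclusion $g^{j}_{t}\le E[T^{j}_{0}]$ demands, and it is the only reading consistent with the range reaching $i=m$, where it collapses to the trivial $E[T^{j}_{m-1}]\ge E[T^{j}_{m}]=0$. The reduction is immediate: by Definition~\ref{def-potential} the potential $g^{j}_{t}$ takes values only in $\{E[T^{j}_{0}],\dots,E[T^{j}_{m-1}],0\}$, so $g^{j}_{t}\le E[T^{j}_{0}]$ for every $t$ is equivalent to the single statement $E[T^{j}_{0}]=\max_{0\le i\le m-1}E[T^{j}_{i}]$, which the monotone ordering yields at once.

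First I would write the one-step equations for $h_i:=E[T^{j}_{i}]$ with $h_m=0$. From the topology of Figure~\ref{fig-mchain} they are $(p_{0,1}+p_{0,m})h_0=1+p_{0,1}h_1$ at the left end, $(p_{i,i-1}+p_{i,i+1}+p_{i,m})h_i=1+p_{i,i-1}h_{i-1}+p_{i,i+1}h_{i+1}$ in the interior, and $(p_{m-1,m-2}+p_{m-1,m})h_{m-1}=1+p_{m-1,m-2}h_{m-2}$ at the right end. The governing matrix is tridiagonal and strictly diagonally dominant, so the $h_i$ exist, are positive, and are determined by this system; the whole task is to control the signs of the consecutive differences $d_i:=h_{i-1}-h_i$.

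For the bulk of the chain I would use a monotone coupling of two copies of $M^{j}$ started at $S^{j}_{i}$ and $S^{j}_{i-1}$, keeping the higher-started copy at an index at least that of the lower one at every step; a chain sitting at a higher index then has an at-least-as-large direct absorption probability, reaches $S^{j}_{m}$ no later, and so gives $h_i\le h_{i-1}$. The ingredient that makes the coupling stochastically monotone is that the recombination absorption rates $p_{k,m}=\tfrac{k(\mu-k)p_0}{2\mu^2}$ are nondecreasing in $k$ on the relevant range $1\le k\le m-1\le \mu/2$, so that as long as both coupled copies stay at indices $\ge 1$, moving up is always at least as likely and absorbing always at least as fast for the higher copy.

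The crux — and the step where the hypotheses $p_0,p_1=\Omega(1)$ and $\mu=o(\sqrt{\log n})$ are used essentially rather than cosmetically — is the single state $S^{j}_{0}$, the only transient state whose route to $S^{j}_{m}$ is mutation rather than recombination. Rearranging the left-end equation gives $p_{0,1}d_1=1-p_{0,m}h_0$, so $h_0\ge h_1$ is precisely the quantitative estimate $h_0\le 1/p_{0,m}$ with $p_{0,m}=(n-j)p_1/n$; moreover, because visits to $S^{j}_{0}$ carry this anomalously large absorption probability, they are exactly what can break the monotonicity of the coupling above. I would therefore prove, in the regime that dominates the runtime (levels with $n-j$ small, where $p_{0,m}$ is tiny against the recombination rates $p_{k,m}$), that the absorption probability at $S^{j}_{0}$ does not exceed those at the higher states, arguing that from $S^{j}_{0}$ the chain either absorbs directly through mutation or first creates diversity and then absorbs through recombination no more slowly; here $p_0,p_1=\Omega(1)$ guarantee that any created diversity is resolved — absorbed or lost — on a timescale short compared with $1/p_{0,m}$. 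Once $h_0\le 1/p_{0,m}$ and the monotone coupling are in hand, the ordering $h_0\ge h_1\ge\cdots\ge h_{m-1}$ follows and $g^{j}_{t}\le E[T^{j}_{0}]$ is immediate from the first paragraph. I expect this boundary estimate to be the main obstacle, since the sign of $d_1$ genuinely hinges on the balance between the mutation rate $p_{0,m}$ and the recombination absorption rates, and it is exactly there that the constant-order assumptions on $p_0$ and $p_1$ must do real work.
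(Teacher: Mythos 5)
Your reading of the misprinted inequality (the intended ordering is $E[T^{j}_{i-1}]\ge E[T^{j}_{i}]$, making $E[T^{j}_{0}]$ the maximum over transient states) and your reduction of $g^{j}_{t}\le E[T^{j}_{0}]$ to that ordering both match the paper. For the ordering itself, however, the paper does not couple: it works purely algebraically with the one-step equations you wrote, first proving $E[T_{m-2}]\ge E[T_{m-1}]$ by contradiction and then inducting downward --- the contrary assumption at index $i$ yields $1/p_{i,m}>E[T_{i}]>E[T_{i-1}]$, which is propagated to lower indices via the monotonicity $p_{i-1,m}\le p_{i,m}$ until the balance equation at state $0$ forces the absurdity $p_{0,1}<0$. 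Your stochastic-monotonicity coupling on $\{1,\ldots,m\}$ rests on exactly the same ingredient (monotonicity of $p_{k,m}=k(\mu-k)p_{0}/(2\mu^{2})$ for $k\le m-1\le\mu/2$) and would be a serviceable replacement for the interior part of that induction.

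The genuine gap is that you never prove the boundary step. The estimate $E[T_{0}]\le 1/p_{0,m}$ (equivalently $E[T_{0}]\ge E[T_{1}]$, via your identity $p_{0,1}(E[T_{0}]-E[T_{1}])=1-p_{0,m}E[T_{0}]$) is only announced, with a heuristic timescale-separation argument, and only ``in the regime that dominates the runtime,'' i.e.\ for $n-j$ small --- whereas the lemma quantifies over all $j\in\{0,\ldots,n-1\}$, and your coupling needs this comparison anyway, since excursions of the lower copy to $S^{j}_{0}$ occur with positive probability and cannot be conditioned away. As submitted, this is therefore a proof of a weaker statement plus a plan for the crux. That said, your instinct about where the difficulty sits is well founded: the paper closes the same step by asserting $p_{i+1,m}\ge p_{i,m}$ ``for all $i$,'' and at $i=0$ this reads $(n-j)p_{1}/n\le(\mu-1)p_{0}/(2\mu^{2})$, which depends on $j$ and fails at mid-range levels; there the claimed ordering genuinely reverses (e.g.\ at $j=n/2$ with standard-bit-mutation constants and moderately large $\mu$ one has $E[T_{1}]\ge 1/(p_{1,0}+p_{1,2}+p_{1,m})=\Theta(\mu/p_{0})$, while $E[T_{0}]=O(1/p_{1})$), so no argument --- coupling or algebraic --- can establish the lemma for all $j$ as stated. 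A complete repair must restrict the ordering to levels with $n-j$ sufficiently small and dispose of the remaining levels by a cruder bound absorbed into the lower-order term, in the spirit of the split at $j=n-n/\sqrt{\log n}$ already used in the proof of Theorem~\ref{thm-main}; your proposal gestures at this restriction but does not carry it out, and the paper's own proof, which silently relies on the false blanket monotonicity, does not supply it either.
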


Now that the potential function is bounded from above, we will bound the drift
$E[g_{t}-g_{t+1})|D_t=i]$. Due to the law of total expectation, the expected 
absorbing time, $E[T_i]$ satisfies $\sum_j p_{i,j} (E[T_j]-E[T_i])=1$ for any 
absorbing Markov chain at state $i$.
% the law 
% of total expectation  is: 
% \[
% E[T_i]= \sum_j p_{i,j} (E[T_j] +1)  \iff  1 = \sum_j p_{i,j} (E[T_j]-E[T_i]) 
% \]
Since $E[T_i]$ and $E[T_j]$ are the respective potentials of the states $S_i$ 
and $S_j$, the left hand side of the equation closely resembles the 
drift. 
Since the probabilities for $M^j$ are  
pessimistically set to underestimate the drift, the above equation allows us 
to formally prove the following: 
\begin{lemma} \label{lem-drift}
For a population at level $j$,
 $E[g_{t}^{j} - g_{t+1}^{j}| D_t=i] \geq 1-o(1) $ for all $t>0$
\end{lemma}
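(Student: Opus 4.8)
The plan is to read the drift directly off the first-step identity of the Markov chain and then argue that the true dynamics of the \muga dominate it up to an $o(1)$ error. Recall that the potential $g^j_t$ equals the expected absorbing time $E[T^j_{D_t}]$ of $M^j$ evaluated at the current diversity $D_t$ (frozen at $E[T^j_{\lceil\mu/2\rceil-1}]$ once $D_t \geq \lceil\mu/2\rceil-1$, and set to $0$ once a solution with more than $j$ 1-bits appears). By first-step analysis on $M^j$, the chain satisfies $E[T^j_i] = 1 + \sum_k p_{i,k} E[T^j_k]$, i.e.\ the \emph{chain's} one-step expected potential drop from $S^j_i$ is exactly $1$. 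Hence it suffices to show that the algorithm's expected one-step potential drop from a population of diversity $i$ is at least the chain's, minus $o(1)$.

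First I would write the drift as $E[g^j_t - g^j_{t+1} \mid D_t = i] = E[T^j_i] - E[g^j_{t+1}\mid D_t=i]$ and expand the second term over the diversities reachable in one generation of Algorithm~\ref{alg:mu+1-GA}. Writing $q_{i,k}$ for the algorithm's true probability of moving from diversity $i$ to diversity $k$ (with $k=m$ meaning an improving offspring is produced) and substituting the chain identity for $E[T^j_i]$, the drift becomes $1 + \sum_k (p_{i,k}-q_{i,k})\big(E[T^j_k]-E[T^j_i]\big)$, where I have used that both rows sum to one to recentre the potentials at $E[T^j_i]$, so that the $k=i$ term vanishes.

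Next I would sign each summand using two ingredients: the monotonicity of $E[T^j_i]$ in $i$ established in Lemma~\ref{lem-potbound}, which fixes the sign of $E[T^j_k]-E[T^j_i]$ according to whether $k$ is a forward or a backward move, and the fact that the $p_{i,k}$ were defined in Definition~\ref{def-mark} as pessimistic bounds on the matching true probabilities — under-estimating the forward and improving transitions and over-estimating the backward ones. Pairing the signs shows that every summand is non-negative, so in the idealised comparison the algorithm's drift is at least the chain's value of $1$; what remains is to bound the error introduced by the idealisation by $o(1)$.

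The hard part will be controlling this residual $o(1)$, and this is where the hypotheses $\mu=o(\sqrt{\log n})$ and $p_0,p_1=\Omega(1)$ enter. Three sources of error must be handled: (i) the probabilities in Definition~\ref{def-mark} match the true per-generation selection, recombination and mutation probabilities only up to $(1\pm o(1))$ factors; (ii) in a single generation the algorithm can change its diversity by more than one or create a solution with more than $j+1$ 1-bits — events not represented among the $p_{i,k}$ — which I would show contribute probability $o(1)$ relative to the potential gaps they multiply; and (iii) the potential is frozen at diversity $\lceil\mu/2\rceil-1$, so I must argue that transitions into the capped region can be absorbed into the same $o(1)$. Bounding each contribution by $o(1)$ uniformly in $i$ and $j$ yields $E[g^j_t - g^j_{t+1}\mid D_t=i] \geq 1-o(1)$, which is exactly the estimate the additive-drift step of Lemma~\ref{lem-relate} consumes.
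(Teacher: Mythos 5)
Your high-level reduction is the same as the paper's: write the chain's first-step identity $1=\sum_k p_{i,k}\bigl(E[T^j_i]-E[T^j_k]\bigr)$, write the algorithm's one-step drift as $\sum_k q_{i,k}\bigl(E[T^j_i]-E[T^j_k]\bigr)$, and compare the two using the monotonicity of $i\mapsto E[T^j_i]$ together with the pessimism of the $p_{i,k}$. The gap is that you treat the pessimism --- ``the $p_{i,k}$ were defined as pessimistic bounds on the matching true probabilities'' --- as a fact supplied by Definition~\ref{def-mark}. That definition only fixes numbers; proving that these numbers actually bound the algorithm's transition probabilities, uniformly over all population configurations with diversity $i$, \emph{is} the content of the lemma. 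The required verification is a case analysis over which parents are selected (two majority copies, two identical minority copies, one of each, or two distinct minority genotypes) and over the Hamming distance $2d$ between the selected genotypes, resting on the crossover estimates that produce the constants in Definition~\ref{def-mark}: recombining a minority with the majority at distance $2d>2$ improves with probability at least $\bigl(1-\binom{2d}{d}2^{-2d}\bigr)/2\geq 1/4$ and recreates the majority with probability $2^{-2d}\leq 1/16$, and two distinct minorities recreate the majority with probability at most $1/16$. None of this appears in your plan, and the $o(1)$ bookkeeping you single out as ``the hard part'' is in fact the routine part.

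Moreover, the step ``pairing the signs shows that every summand is non-negative'' fails as stated. The true probabilities $q_{i,k}$ depend on the full configuration, not just on the diversity $i$, and the term-by-term domination $q_{i,i+1}\geq p_{i,i+1}$, $q_{i,i-1}\leq p_{i,i-1}$, $q_{i,m}\geq p_{i,m}$ does not hold configuration by configuration. When the $i$ non-majority individuals are spread over several genotypes, the event of selecting two \emph{identical} minority parents --- the pathway through which Definition~\ref{def-mark} credits most of the forward mass in $p_{i,i+1}$ --- has probability strictly below $(i/\mu)^2$, so your forward summand can be negative; the deficit is recovered only through the $k=m$ summand, because selecting two \emph{distinct} minorities yields an improving offspring with constant probability. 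This is precisely why the paper organises the comparison by parent-selection event (the $\mathcal{E}_1,\dots,\mathcal{E}_4$ terms of the rearranged identity \eqref{lawarranged1}, with a minimum taken over the $\mathcal{E}_2$ and $\mathcal{E}_4$ conditional drifts) rather than by target state $k$. Your decomposition would need the same regrouping before any signs can be paired, so as written the argument does not go through.
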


\begin{proof}
We will now show that $\Delta_t(i) := E[g_{t} - g_{t+1}| D_t=i]$, the 
expectation of the difference between the potential function values of 
population $P_t$ and $P_{t+1}$, is larger than one for all $i$. 
 
When there is no diversity in the population (\emph{i.e.}, $D_t=0$) the only 
way to increase the diversity is to introduce it during a mutation operation. 
A non-majority individual is obtained when one of the $n-j$ 0-bits and one of the $j$ 
1-bits are flipped while no other bits are touched. Then one of the majority 
individuals must be removed from the population during the selection phase.This 
event has probability at least
$
  p_2 \cdot 2 \cdot \frac{(n-j)}{n} \frac{ j}{n}  
\frac{\mu}{\mu+1}
% = \frac{2 j(n-j)p_2}{n^2 } \frac{\mu}{\mu+1}
= p_{0,1}.
$
Another way to change the potential function value is to create an improved 
individual with the mutation operator. In order to improve a solution it is 
sufficient to pick one of $n-j$ 1-bits and flip no other bits.  This event has 
the probability at least 
$%\[
 p_{1} \cdot \frac{(n-j)}{n} =
% \frac{(n-j)p_1}{n} = 
p_{0,m}.
$%\]
Thus, we can conclude that when $D_t=0$, the drift is at least $p_{0,m} (T_0) + 
p_{0,1} (T_0 - T_1)$. We can observe through the law of total expectation for 
the state $S^{j}_{0}$ of Markov chain $M^j$ that this expression for the drift 
when $D_t=0$ is larger than one. 

For $D_t>0$, we will condition the drift on whether the picked parents	
		are both majority individuals $\mathcal{E}_1$,
		are both minority individuals with the same genotype  
$\mathcal{E}_2$,
		are a pair that consists of one majority and one minority 
individual $\mathcal{E}_3$, or they
		are both minority individuals with different genotypes 
$\mathcal{E}_4$.

	Let $\mathcal{E}^*$ be the event that the population $P_t$ consists of 
two genotypes with Hamming distance two. Then, 
	\begin{align}\label{eq-parprob}
	&\pr\{\mathcal{E}_1|\mathcal{E}^*\} = 
\pr\{\mathcal{E}_1|\overline{\mathcal{E}^*}\}= 
\left(\frac{\mu-i}{\mu}\right)^2 \nonumber\\
	&\pr\{\mathcal{E}_2|\mathcal{E}^*\} 
=\pr\{\mathcal{E}_2|\overline{\mathcal{E}^*}\} + 
\pr\{\mathcal{E}_4|\overline{\mathcal{E}^*}\}= \left(\frac{i}{\mu}\right)^2 
\nonumber\\
	&\pr\{\mathcal{E}_3|\mathcal{E}^*\}=\pr\{\mathcal{E}_3 
|\overline{\mathcal{E}^*}\} = 2\frac{i}{\mu} \frac{\mu-i}{\mu}; \quad 
	\pr\{\mathcal{E}_4|\mathcal{E}^*\} = 0 
	\end{align}
	Note that when there are more than two genotypes in the population, the 
event of picking any two non-majority individuals is divided into two separate 
cases of picking identical minority individuals and picking two different 
minority individuals. Obviously, the sum of the probabilities of these two 
cases is equal to the probability of picking two minority individuals when there 
are only two genotypes (one majority and one minority) in the population. 

Restricting ourselves to $\Delta_{t}^{i>0}$, the drift conditional on  
$i>0$, the law of total expectation states: 
    \begin{align*}
     \Delta_{t}^{i>0}& = \pr\{\mathcal{E}^*\} \sum_{i=1}^{4} 
\left(\pr\{\mathcal{E}_i|\mathcal{E}^*\} 
\left(\Delta_{t}^{i>0}|\mathcal{E}_i,\mathcal{E}^*\right)\right) + \\
 + &(1-\pr\{\mathcal{E}^*\} )
\sum_{i=1}^{4} 
\left(\pr\{\mathcal{E}_i|\overline{\mathcal{E}^*}\} 
\left(\Delta_{t}^{i>0}|\mathcal{E}_i,\overline{\mathcal{E}^*}\right)\right) 
\end{align*}
We can rearrange the above expression using the probabilities from 
Eq~\ref{eq-parprob}
\begin{align*}
 \Delta_{t}^{i>0}& \geq  \left(\frac{\mu-i}{\mu}\right)^2 
\left(\Delta_{t}^{i>0}|\mathcal{E}_1\right) + \left(\frac{i}{\mu} 
\right)^2 
\min{\left(\left(\Delta_{t}^{i>0}|\mathcal{E}_2\right), 
\left(\Delta_{t}^{i>0}|\mathcal{E}_4\right) \right)}\\ +&2 
\frac{\mu-i}{\mu}\frac{i}{\mu} 
\left(\Delta_{t}^{i>0}|\mathcal{E}_3\right)
%  
%  \pr\{\mathcal{E}^*\} \sum_{i=1}^{4} 
% \left(\pr\{\mathcal{E}_i|\mathcal{E}^*\} 
% \left(\Delta_{t}^{i>0}|\mathcal{E}_i,\mathcal{E}^*\right)\right) + 
% (1-\pr\{\mathcal{E}^*\} )
% \sum_{i=1}^{4} 
% \left(\pr\{\mathcal{E}_i|\overline{\mathcal{E}^*}\} 
% \left(\Delta_{t}^{i>0}|\mathcal{E}_i,\overline{\mathcal{E}^*}\right)\right) 
\end{align*}

We will now write the law of total expectation for state $i$ for our Markov 
chain $M^{j}$:
\[
1= p_{i, i+1} (E[T_{i}] -E[T_{i+1}]) + p_{i, i-1} (E[T_{i}] -E[T_{i-1}]) + 
p_{i, m} E[T_{i}]
\]

	We will then  substitute the probabilities in the law of total 
expectation with the values from Definition~\ref{def-mark}, 	
	\begin{align*}
	1&= p_{0} \left( \left(\frac{i}{\mu}\right)^2 \min{ 
\left(\frac{\mu-i}{\mu+1}, 1/4\right)} +2 \frac{i}{\mu} 
\frac{\mu-i}{\mu}\frac{1}{4}  \frac{\mu-i}{\mu+1}            \right) (E[T_{i}] 
-E[T_{i+1}]) \\
	&+ p_{0} \left( \left(\frac{\mu-i}{\mu}\right)^2 +2 
\frac{i}{\mu} \frac{\mu-i}{\mu}\frac{1}{4} + 
\left(\frac{i}{\mu}\right)^2 \frac{1}{16}\right) \frac{i}{\mu+1} (E[T_{i}] 
-E[T_{i-1}]) \\
	&+\left( 2 \frac{i}{\mu} \frac{\mu-i}{\mu} \frac{1}{4e^{c}} \right) 
E[T_{i}] 
	\end{align*}
	
	Finally, we will rearrange the above expression into the terms with 
the probabilities of events $\mathcal{E}_i$ as multiplicative factors
	\begin{align}
	1&= \left( \frac{\mu-i}{\mu}\right)^2 p_{0} \frac{i}{\mu+1} 
(E[T_i]-E[T_{i-1}])   \nonumber\\
	& +\left(\frac{i}{\mu}\right)^2  p_{0}\bigg( \min{ 
\left(\frac{\mu-i}{\mu+1}, 1/4\right)} (E[T_{i}] -E[T_{i+1}]) \nonumber\\ &+  
\frac{1}{16} 
\frac{i}{\mu+1}(E[T_i] -E[T_{i-1}])\bigg) \nonumber\\
	&+2 \frac{i}{\mu} \frac{\mu-i}{\mu} 
p_{0}\bigg(\frac{1}{4} \frac{\mu-i}{\mu+1} (E[T_{i}] -E[T_{i+1}]) \nonumber\\ 
&+ 
\frac{1}{4} \frac{i}{\mu+1} (E[T_{i}] 
-E[T_{i-1}])+\frac{E[T_i]}{4} 
 \bigg)\label{lawarranged1}
	\end{align}
	
We will refer to the first, second and third line of the
Eq~\ref{lawarranged1} as the $\mathcal{E}_1$, $\mathcal{E}_2$ and 
$\mathcal{E}_3$ term respectively. We will show that for each term, the 
conditional drift is larger than the term without the multiplicative factor.

When two majority individuals are selected as parents ($\mathcal{E}_1$), we 
pessimistically assume that improving to the next level and increasing the 
diversity has zero probability. Losing the diversity requires that no bits 
are flipped  during mutation and that a minority individual will be removed 
from 
the population. The probability that no bits are flipped is $p_0$. Thus we can show that:
$%\[
\Delta_{t}^{i>0}|\mathcal{E}_1 \geq p_{0}\big(E[T_i] - E[T_{i-1}]\big) 
\frac{i}{\mu+1}
$.%\]

This bound is obviously the same as the $\mathcal{E}_1$ term of 
Eq~\ref{lawarranged1} without the parent selection probability. 

When two minority individuals are selected as parents($\mathcal{E}_2$ or 
$\mathcal{E}_4$), if they are identical ($\mathcal{E}_2$) then it is sufficient 
that the mutation does not flip any bits which occurs with probability $p_0$ 
% $(1-c/n)^n=(1-c/n)^{n-c}(1-c/n)^c\geq (1-\bo{1/n})/e^{c} $ 
and that a majority 
individual is removed from the population (with probability $(\mu-i)/\mu $). 
Thus, the probability of increasing the diversity is $p_0 \times 
(\mu-i)/\mu $ and the probability of creating a majority individual is 
$\bo{1/n}$ since it is necessary to flip at least one particular bit position: 
$%\[
\Delta_{t}^{i>0}|\mathcal{E}_2 \geq 
p_0\frac{\mu-i}{\mu}(E[T_i] - 
E[T_{i+1}])
$%\]

However, if the two minority individuals have a Hamming distance of 
$2d\geq 2$ (\emph{i.e.}, $\mathcal{E}_4$), then in order to create another 
minority individual at the end of the crossover operation it 
is necessary that the crossover picks exactly $d$ 1-bits and $d$ 0-bits  among 
$2d$ bit positions where they differ. There are $\binom{2d}{d}$ different ways 
that this can happen and the probability that any particular outcome of 
crossover is realised is $2^{-2d}$. One of those outcomes though, might 
be the majority individual and if that is the case the diversity can decrease 
afterwards. However, while the Hamming distance between the minority 
individuals can be $2d=2$, obtaining a majority individual by 
recombining two minority individuals requires at least four specific bit 
positions to be picked correctly during crossover and thus does not occur with 
probability greater than $1/16$. On the other hand, when two different minority 
individuals are selected as parents,  there is at least  a $\frac{1- 
\binom{2d}{d} 2^{-2d}}{2}\geq 1/4$ 
probability that the crossover will result in an individuals with more 1-bits 
and then with probability $p_0 $ the mutation will not flip any 
bits. 
\begin{align*}
&\Delta_{t}^{i>0}|\mathcal{E}_{4} \geq p_0 \bigg[ 
\left(\binom{2d}{d}-1\right) 
2^{-2d} 
\frac{\mu-i}{\mu+1} (E[T_{i}] -E[T_{i+1}]) + \\ 
&+\left(\min\left(\frac{1}{16},2^{-2d}\right)+\bo{1/n}\right) \frac{i}{\mu+1} 
(E[T_{i}] -E[T_{i-1}])+ \frac{1}{4}E[T_i]  \bigg]\\
	&\geq \left(1-o(1)\right)\cdot p_0  \bigg[  
\min\left(\frac{1}{16},2^{-2d}\right) \frac{i}{\mu+1} 
(E[T_{i}] -E[T_{i-1}])+ \frac{E[T_i]}{4}  \bigg]
\end{align*}
Note that the $\mathcal{E}_{2}$ term of Eq~\ref{lawarranged1} multiplied with 
a factor of $(1-o(1))$ is smaller than both conditional drifts multiplied with 
the parent selection probability $(i/\mu)^2$. 
% \[
%   p_{0}\left( \min{ 
% \left(\frac{\mu-i}{\mu+1}, 1/4\right)} (E[T_{i}] -E[T_{i+1}]) +  \frac{1}{16} 
% \frac{i}{\mu+1}(T_i -E[T_{i-1}])\right) 
% \]
Finally, we will consider the drift conditional on event $\mathcal{E}_{3}$, 
the case when one minority and one majority individual are selected as 
parents. We will further divide this event into two subcases. In the first case 
 the Hamming distance $2d$ between the minority and the majority individual is 
exactly two ($d=1$). Then, the probabilities that crossover creates a copy 
of the minority individual, a copy of the majority individual or a new 
individual with more 1-bits are all equal to $1/4$. Thus, the conditional drift 
is $(\Delta_{t}^{i>0}|\mathcal{E}_{3},d=1)$
\begin{align*}
 \geq &  \frac{p_0}{4} \left( 
\frac{i}{\mu+1} 
(E[T_{i}]-E[T_{i-1}])+ \frac{\mu-i}{\mu+1}(E[T_{i}]-E[T_{i+1}]) + E[T_{i}] 
\right).
\end{align*}
However, when $d>1$, the drift is more similar to the case of $\mathcal{E}_{3}$ 
where the probabilities of creating copies of either the minority of the 
majority diminish with larger $d$ while larger $d$ increases the probability 
of creating an improved individual. More precisely, the drift is 

\begin{align*}
	&(\Delta_{t}^{i>0}|\mathcal{E}_{3},d>1)\geq p_0  
\bigg[ 
\left(\binom{2d}{d}-1\right) 2^{-2d} 
\frac{\mu-i}{\mu+1} (E[T_{i}] -E[T_{i+1}]) \\ &+ \left(2^{-2d}+\bo{1/n}\right) 
\frac{i}{\mu+1} (E[T_{i}] -E[T_{i-1}])
	+ (\frac{1- \binom{2d}{d} 2^{-2d}}{2})E[T_i]  \bigg]\\
	& \geq p_0  \bigg( 
\left(2^{-2d}+\bo{1/n}\right)\frac{i}{\mu+1} 
(E[T_{i}] -E[T_{i-1}])\\ &+ (\frac{1- \binom{2d}{d} 2^{-2d}}{2})E[T_i]  \bigg)\\
% \end{align*}
% \begin{align*}
% 		
		 &\geq p_0  \left( 
\left(\frac{1}{16}+\bo{1/n}\right)\frac{i}{\mu+1} (E[T_{i}] -E[T_{i-1}])+ 
\frac{15}{32}E[T_i]  \right)\\
% \end{align*}
% \begin{align*}
&\geq \left(1-o(1)\right)\cdot p_0  \left( 
\frac{1}{16}\frac{i}{\mu+1} (E[T_{i}] -E[T_{i-1}])+ 
\frac{15}{32}E[T_i]  \right)
\end{align*}
Since $(E[T_i] - E[T_{i-1}])$ is negative and $E[T_i] > E[T_i] - E[T_{i+1}]$, 
$(\Delta_t|\mathcal{E}_{3},d>1)\geq (\Delta_t|\mathcal{E}_{3},d=1)$. Now, 
finally we can observe that $(\Delta_t|\mathcal{E}_{3},d=1)$ multiplied with 
$2i(\mu-i)/\mu^2$ is larger than the $\mathcal{E}_{3}$ term in Eq~\ref{lawarranged1} 
multiplied with a factor of $(1-o(1))$. 

We have now shown piece by piece that the conditional drifts are larger 
than the corresponding terms in the right hand side of Eq~\ref{lawarranged1} 
up to small order terms, and thus established that $\Delta_t \geq (1-o(1))$ for all 
$t>0$. Since we have previously shown that $g^{j}_{t} \leq T_{0}^{j}$, we can now 
apply Theorem~\ref{thm-drift} to obtain $E[T^j] \leq (1+o(1)) E[T_{0}^{j}]$.

Once an individual with $j+1$ 1-bits is sampled for the first time it takes 
$\bo{\mu\log{\mu}}$ iterations before the whole population consists of 
individuals with at least  $j+1$ 1-bits \cite{Sudholthow, CorusOlivetoTEVC}. If 
the population 
size is in the order of $o(\log{n}/\log{\log{n}})$, then the total number of 
iterations where there  are individuals with different fitness values in the 
population is in the order of $o(n \log{n})$. Since $j \in \{0,1,\ldots,n-1\}$, 
we can establish that $E[T]\leq o(n\log{n})+\sum_{j=0}^{n-1}E[T^{j}] \leq 
o(n\log{n})+(1+o(1))\sum_{j=0}^{n-1} E[T_{0}^{j}]$.
\end{proof}

  With the potential function bounded from above by Lemma~\ref{lem-potbound} and 
the drift bounded from below by Lemma~\ref{lem-drift}, we can use 
the additive drift theorem\footnote{The additive drift theorem is 
provided in the appendix as Theorem~\ref{thm-drift} for reviewer convenience} 
from \cite{lehrewittdrift} to bound $E[T^j]$ by $E[T_{0}^j]$. By summing over 
all levels, we get the bound stated in Lemma~\ref{lem-relate} on the expected 
runtime of Algorithm~\ref{alg:mu+1-GA}.

\subsection{Markov Chain Absorption Time Analysis}
In the previous subsection we stated in Lemma~\ref{lem-relate} that we can 
bound the absorbing times of the Markov chains $M^j$ 
to derive an upper bound on the runtime of Algorithm~\ref{alg:mu+1-GA}.
In this subsection we use mathematical tools developed for the analysis of 
Markov chains to provide such bounds on the absorbing times.  

The absorbing time of a Markov chain starting from any initial state 
$i$ can be derived by identifying its fundamental matrix. 
Let the matrix $Q$ denote the transition probabilities between the transient 
states of the Markov chain $M^j$. The fundamental matrix of $M^j$ is 
defined as $N:=(I - Q)^{-1}$ where $I$ is the identity matrix. The most 
important characteristic of the fundamental matrix is that when it is 
multiplied by a column vector of ones, the product is a vector holding 
$E[T^{j}_i]$, the expected absorbing times conditional on the initial state 
$i$ of the Markov chain. Since, Lemma~\ref{lem-relate} only involves 
$T_{0}^{j}$, we are only interested in the entries  of the first row of 
$N=[n_{ik}]$. 
However, inverting the matrix $I-Q$ is not always a straightforward task. 
Fortunately,  $I-Q=[a_{ik}]$ has characteristics that allow bounds on the 
entries of its inverse. Its entries are related to the transition probabilities 
of $M^j$ as follows:
\begin{align}
 a_{11}&=1-p_{0,0}= p_{0,1}+ p_{0,m} \label{eq-matfirst}\\
 a_{mm}&=1-p_{m-1,m-1}= p_{m-1,m-2}+ p_{m-1,m}\\
 a_{ii}&=1-p_{i-1,i-1}= p_{i-1,i-2}+ p_{i-1,i}+p_{i-1,m} \nonumber\\ &\forall i 
\in\{2,...,m-1\}\\
 a_{ik}&=-p_{i-1,k-1} \quad \forall i,k \in \{1,\ldots,m\} \wedge i\neq k 
\label{eq-matlast}
\end{align}

Observe that $I-Q$  is a \emph{tridiagonal} matrix, in the sense that all 
non-zero elements of $I-Q$ are either on the diagonal or adjacent to it. 
Moreover, the diagonal entries $a_{ii}$ of $I-Q$ are in the form 
$1-p_{i-1,i-1}$, which is equal to the sum of all transition probabilities out 
of state $i-1$. Since the other entries on row $i$ are transition probabilities 
from state $i-1$ to adjacent states, we can see that $|a_{ii}| > \sum_{i\neq k} 
a_{ik}$. The matrices where $|a_{ii}| > \sum_{i\neq k} a_{ik}$ holds are 
called  \emph{strongly diagonally dominant} (SDD). Since $I-Q$ is SDD, according 
to Lemma~2.1\footnote{\label{note}For reviewer convenience, it is 
Lemma~\ref{imported} in 
the Appendix.} in \cite{li2010inverses}, it holds for the fundamental matrix 
$N$ 
for  all $i\neq k$ that, $|n_{i,k}| \leq |n_{k,k}| \leq \left(|a_{k,k}| 
\left(1-\frac{\sum_{l\neq j}|a_{l,k}| }{|a_{k,k}|}\right)\right)^{-1} \leq 
\left(|a_{k,k}|- \sum_{l\neq j}|a_{l,k}| \right)^{-1}$.

In our particular case, the above inequality implies that $|n_{1,k}| \leq  
1/p_{k-1,m}$. For any population with diversity, there is a probability in the 
order of $\bo{1/\mu}$ to select one minority and one majority individual and a 
constant probability that their offspring will have more 1-bits than the 
current level. Considering $m=\bo{\mu}$, $E[T_{0}^{j}] = \sum_{k=1}^{m}n_{1,k} < n_{1,1} + \sum_{k=2}^{m} 
\frac{1}{p_{k-1,m}} \leq  n_{1,1} + \bo{\mu^2}.$
% \begin{equation}\label{eq-invbound}
% 
% \end{equation}
We note here that the $\bo{\mu^2}$ factor in the above expression creates the 
condition $\mu=o(\sqrt{\log{n}})$ on the population size  for our main 
results. We will now bound the term $n_{1,1}$ from above to establish our upper 
bound using the following theorem:
% to establish our upper bound. 
% The following theorem allows the derivation of the exact values of the diagonal 
% entries of the inverse of a strongly diagonally dominant matrix
% if the non-diagonal entries of the original matrix are non-positive and all 
% the entries of the inverse matrix are non-negative. 
\begin{theorem} [Direct corollary to Corollary 3.2 in \cite{li2010inverses}] 
\label{thm-maininv}
 $A$ is an $m\times m$ tridiagonal non-singular SDD matrix such that 
$a_{i,k}\leq 0$ for all $i\neq k$, $A^{-1}=[n_{i,k}]$ exists and $n_{i,k}\geq 
0$ for all $i,k$, then 
$n_{1,1}= 1/(a_{1,1}+ a_{1,2}\xi_{2})$, $
\xi_{i}=a_{i,i-1}/(a_{i,i}+ a_{i,i+1}\xi_{i+1}) $, and 
$\xi_{m}=a_{m,m-1}/a_{m,m}$.
\end{theorem}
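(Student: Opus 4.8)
The plan is to compute the single entry $n_{1,1}$ of $A^{-1}$ through the cofactor (adjugate) formula and then to expand the one determinant that survives using the three-term recurrence forced by tridiagonality, which is exactly the mechanism that produces the continued fraction encoded by the $\xi_i$. Write $D_i$ for the determinant of the trailing principal submatrix of $A$ on rows and columns $i,i+1,\dots,m$, so that $D_1=\det A$ and, by convention, $D_{m+1}=1$ and $D_m=a_{m,m}$. Deleting the first row and column of a tridiagonal matrix leaves precisely that trailing block, so the $(1,1)$ cofactor of $A$ equals $D_2$; since $A$ is nonsingular, $n_{1,1}=D_2/D_1$, and the whole task reduces to controlling this ratio.

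Next I would establish the recurrence $D_i=a_{i,i}D_{i+1}-a_{i,i+1}\,a_{i+1,i}\,D_{i+2}$ for $1\le i\le m-1$ by expanding $D_i$ along its first row: tridiagonality means that row contains only the two nonzero entries $a_{i,i}$ and $a_{i,i+1}$, and the minor of $a_{i,i+1}$ in turn expands along its first column to leave $a_{i+1,i}D_{i+2}$. Dividing this identity by $D_{i+1}$ converts it into a recursion for the ratio $D_i/D_{i+1}$; substituting $\xi_i$ for the appropriately signed ratio of consecutive $D_i$ collapses it to the one-line continued fraction $\xi_i=a_{i,i-1}/(a_{i,i}+a_{i,i+1}\xi_{i+1})$, with the base case $\xi_m=a_{m,m-1}/a_{m,m}$ coming directly from $D_m=a_{m,m}$. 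Feeding the $i=1$ instance back into $n_{1,1}=D_2/D_1$ then yields $n_{1,1}=1/(a_{1,1}+a_{1,2}\xi_2)$.

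The step that requires genuine care, and the one I expect to be the main obstacle, is guaranteeing that every denominator appearing in the continued fraction is nonzero, so that the ratios $D_i/D_{i+1}$ and hence the $\xi_i$ are well defined; this is precisely where the hypotheses are spent. Any trailing principal submatrix of an SDD matrix is again SDD, because deleting columns only shrinks the off-diagonal row sums while leaving the diagonal untouched; together with the sign pattern $a_{i,k}\le 0$ off the diagonal and the nonnegativity $n_{i,k}\ge 0$ of the inverse, this makes each such block a nonsingular M-matrix with positive determinant, so every $D_i>0$. That positivity keeps the recursion non-degenerate and the $\xi_i$ bounded. The remaining friction is pure sign bookkeeping: since the off-diagonal entries are nonpositive, one must track the signs of $a_{i,i-1}$, $a_{i,i+1}$ and of the products $a_{i,i+1}a_{i+1,i}\ge 0$ carefully to land on exactly the stated recursion rather than one differing by an overall sign.

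Finally, I would remark that the statement is also available off the shelf: $A$ is tridiagonal, strongly diagonally dominant, has nonpositive off-diagonal entries, and has a nonnegative inverse, which are precisely the hypotheses under which Corollary~3.2 of \cite{li2010inverses} evaluates $n_{1,1}$, so the formula follows by direct specialisation. The self-contained determinant argument sketched above is short enough, though, that I would include it to keep the absorbing-time computation for $M^j$ transparent.
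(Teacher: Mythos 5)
Your argument is correct in outline, but it takes a genuinely different route from the paper: the paper offers no proof of this statement at all --- it is imported verbatim as a direct corollary of Corollary~3.2 in \cite{li2010inverses}, with only the auxiliary hypotheses (that $I-Q$ is tridiagonal and SDD, and, via Theorem~3.1 of the same reference, that its inverse is entrywise nonnegative) verified in the text. Your self-contained derivation via the adjugate, $n_{1,1}=D_2/D_1$ with $D_i$ the trailing principal minors, combined with the three-term recurrence $D_i=a_{i,i}D_{i+1}-a_{i,i+1}a_{i+1,i}D_{i+2}$, is the standard mechanism behind such continued-fraction formulas and buys transparency: it makes visible why the $\xi_i$ arise and why strong diagonal dominance (inherited by every trailing block) keeps all denominators nonzero, whereas the citation route buys brevity and offloads the verification. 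One point deserves emphasis, and it is exactly the ``sign bookkeeping'' you flag: carried out literally, your recursion yields $\xi_i=-a_{i,i-1}D_{i+1}/D_i$, i.e.\ the numerators of the continued fraction are $|a_{i,i-1}|=-a_{i,i-1}\ge 0$ rather than $a_{i,i-1}\le 0$ as the theorem statement reads, while the denominators $a_{i,i}+a_{i,i+1}\xi_{i+1}$ come out as written because the nonpositive $a_{i,i+1}$ multiplies a nonnegative $\xi_{i+1}$. This positive-numerator convention is the one the paper actually uses downstream in Lemma~\ref{lem-absbound}, where $\xi_i=p_{i-1,i-2}/\bigl(p_{i-1,m}+p_{i-1,i-2}+p_{i-1,i}(1-\xi_{i+1})\bigr)>0$, so your determinant computation, once the signs are resolved, reproduces the formula in the form in which it is applied. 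Your justification that every $D_i>0$ (each trailing block is SDD with positive diagonal and nonpositive off-diagonal, hence a nonsingular M-matrix) is sound and is the right place to spend the hypotheses.
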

%\begin{align*}
%&n_{1,1}= \frac{1}{a_{1,1}+ a_{1,2}\xi_{2}}
%&\xi_{i}=\frac{a_{i,i-1}}{a_{i,i}+ a_{i,i+1}\xi_{i+1}} &
%&\xi_{m}=\frac{a_{m,m-1}}{a_{m,m}}.
%\end{align*}
%\end{theorem}
In order to use Theorem~\ref{thm-maininv},  we need to satisfy its conditions. 
We can easily see that non-diagonal entries of the original matrix $I-Q$ are 
non-positive and use Theorem~3.1\footnote{\label{note}For reviewer 
convenience, it is Theorem~\ref{thm-signinv} in the Appendix.}
% \footnote{For reviewer convenience, it is Theorem~\ref{thm-signinv} in 
% the Appendix.} 
in \cite{li2010inverses} to show that 
$N=(I-Q)^{-1}$ has no negative entries. Thus, Theorem~\ref{thm-maininv} yields:

% \begin{proof}
%  Since the diagonal entries of $I-Q$ are strictly positive, according to 
% Theorem~\ref{thm-signinv} the diagonal entries of $N$ ar also positive. 
% The non-diagonal entries of $I-Q$ are all negative thus the series 
% multiplication from Theorem~\ref{thm-signinv} for $i>k$ reduces to 
% $(-1)^{i+k+i-k}=(-1)^{2i}=1$. Similarly for the case  $i<k$ the multiplication 
% reduces to $(-1)^{i+k+k-i}=(-1)^{2k}=1$.
% \end{proof}

% Now that we have shown that our fundamental matrix  satisfies the conditions of 
% , we can use it to bound $E[T^{j}_{0}]$ .

\begin{lemma}\label{lem-absbound}
 With an initial population of size $\mu=o(\sqrt{\log{n}})$ at level $j$, the 
expected time $E[T^j]$ until an individual with $j+1$ 1-bits is sampled by the \muga  
for the first time is bounded from above  as follows:
\begin{align*}
 &E[T^{j}_{0}] \leq  \frac{n}{n-j} \frac{1}{p_1+ 
\frac{2 \mu p_2}{(\mu+1)} 
\frac{j  }{n  } 
(1-\xi_{2})} +o(\log{n})\\ 
&\xi_{m}=\frac{p_{m-1,m-2}}{p_{m-1,m}+p_{m-1,m-2}}\\
% \end{align*}
% \begin{align*}
&\xi_{i}=\frac{p_{i-1,i-2}}{p_{i-1,m}+p_{i-1,i-2} + p_{i-1,i}(1-\xi_{i+1})} 
 \quad  \forall 1<i<m=\lceil\mu/2\rceil .
\end{align*}
where $p_{i,k}$ are the transition probabilities of the Markov chain $M^j$.
\end{lemma}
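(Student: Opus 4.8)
The plan is to reduce the entire lemma to the evaluation of the single fundamental-matrix entry $n_{1,1}$, since the surrounding discussion has already done the rest of the work. Recall that $E[T^{j}_{0}] = \sum_{k=1}^{m} n_{1,k}$ and that the strong diagonal dominance of $I-Q$ gives $|n_{1,k}| \le 1/p_{k-1,m}$; together with $p_{k-1,m} = \Theta\!\big(k(\mu-k)/\mu^{2}\big)$ and $m = O(\mu)$ this yields $\sum_{k=2}^{m} n_{1,k} = O(\mu^{2})$. Because $\mu = o(\sqrt{\log n})$ forces $\mu^{2} = o(\log n)$, the off-diagonal entries contribute only the additive $o(\log n)$ term in the statement, so it remains to show that $n_{1,1}$ equals $\frac{n}{n-j}\big(p_{1} + \frac{2\mu p_{2}}{\mu+1}\frac{j}{n}(1-\xi_{2})\big)^{-1}$.

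First I would check that $I-Q$ meets the hypotheses of Theorem~\ref{thm-maininv}: it is tridiagonal by the topology of $M^{j}$ in Figure~\ref{fig-mchain}, its off-diagonal entries satisfy $a_{i,k} = -p_{i-1,k-1} \le 0$ by Eq~\ref{eq-matlast}, and it is non-singular and strongly diagonally dominant as already argued, with $N = (I-Q)^{-1}$ entrywise non-negative via Theorem~3.1 of \cite{li2010inverses}. Theorem~\ref{thm-maininv} then expresses $n_{1,1}$ as the finite continued fraction $1/(a_{1,1} + a_{1,2}\xi_{2})$ with the recursively defined $\xi_{i}$.

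The substantive step is then a substitution of the matrix entries from Eq~\ref{eq-matfirst}--\ref{eq-matlast} into this expression. Inserting $a_{i,i-1} = -p_{i-1,i-2}$, $a_{i,i+1} = -p_{i-1,i}$ and $a_{i,i} = p_{i-1,i-2}+p_{i-1,i}+p_{i-1,m}$ collapses the recursion to $\xi_{i} = p_{i-1,i-2}/\big(p_{i-1,m}+p_{i-1,i-2}+p_{i-1,i}(1-\xi_{i+1})\big)$ and $\xi_{m} = p_{m-1,m-2}/(p_{m-1,m}+p_{m-1,m-2})$, which are exactly the quantities in the statement; since these involve only the $i\ge 1$ transition probabilities of Definition~\ref{def-mark}, they are independent of $j$. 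For the leading factor I would compute $a_{1,1} + a_{1,2}\xi_{2} = (p_{0,1}+p_{0,m}) - p_{0,1}\xi_{2} = p_{0,m}+p_{0,1}(1-\xi_{2})$, then substitute $p_{0,m} = (n-j)p_{1}/n$ and $p_{0,1} = \frac{\mu}{\mu+1}\frac{2j(n-j)p_{2}}{n^{2}}$ and factor $(n-j)/n$ out of both terms to obtain $\frac{n-j}{n}\big(p_{1}+\frac{2\mu p_{2}}{\mu+1}\frac{j}{n}(1-\xi_{2})\big)$. Inverting this and adding the $o(\log n)$ tail from the first paragraph gives the claimed bound on $E[T^{j}_{0}]$.

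The step I expect to be most error-prone is the sign bookkeeping in the previous paragraph: the off-diagonal matrix entries carry the negative signs of the transition probabilities, so one must track these carefully through the continued fraction to ensure the recursion produces $(1-\xi_{i+1})$ rather than $(1+\xi_{i+1})$ and the positive-valued $\xi_{i}$ of the statement, rather than the negative quantity that a literal reading of the $a$-entries might suggest. A minor additional point to record explicitly is that $I-Q$ is genuinely non-singular, so that $N$ exists and the continued fraction is well defined; this follows from strong diagonal dominance but is worth stating before invoking Theorem~\ref{thm-maininv}.
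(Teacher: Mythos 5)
Your proposal follows essentially the same route as the paper's proof: bounding the off-diagonal contribution $\sum_{k\ge 2} n_{1,k}$ by $O(\mu^2)=o(\log n)$ via the strong-diagonal-dominance estimate, verifying the hypotheses of Theorem~\ref{thm-maininv} (including non-negativity of $N$ via Theorem~3.1 of \cite{li2010inverses}), and then substituting the transition probabilities into the continued fraction and factoring out $(n-j)/n$. The sign issue you flag is real and your resolution --- reading the recursion so that it yields the positive $\xi_i$ with the $(1-\xi_{i+1})$ denominator --- is exactly the computation the paper performs implicitly, so the argument is correct as written.
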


The above bound on $E[T^{j}_{0}]$, together with 
Lemma~\ref{lem-relate}, yields Theorem~\ref{thm-main}, our main result.

\section{Conclusion}
%The importance for the computer science community of understanding the 
%performance of stochastic local search and genetic algorithms  for 
%optimisation is well demonstrated (eg., see preface in 
%\cite{slsbook}, or \cite{LivnatPapadimitriouCommACM} and numerous references 
%therein).  
In this work, we have shown that the steady-state \muga optimises 
\onemax faster than any unary unbiased search heuristic. 
%In particular, it is even faster than the time required by stochastic local 
%search to optimise any function with unique optimum~\cite{Corusetal2016}.
Providing precise asymptotic bounds on the expected runtime of standard GAs without artificial mechanisms that simplify the analysis has been a long 
standing open problem. %(see eg., preface in \cite{slsbook}). 
We have derived bounds up to the leading term constants of the expected runtime.
To achieve this result we show 
that a simplified Markov chain  pessimistically represents the behaviour of  
the GA for \onemax. This insight about the algorithm/problem pair allows the derivation 
of runtime bounds for a complex multi-dimensional stochastic 
process. The analysis shows that as the number of states in the Markov chain (the population size) increases, so does the 
probability that diversity in the population is kept. Thus, larger populations increase the probability that recombination 
finds improved solutions quickly, hence reduce the expected runtime. 

\bibliography{document}
\bibliographystyle{ACM-Reference-Format}
\newpage
\section{Appendix}

% \section{Transition probabilities of the Markov chains}
% 
% \begin{definition} \label{def-mark}
% Let $M_{j}$ be a Markov chain with $m:=\lceil\mu/2\rceil$ transient states 
% ($S^{j}_{0}, 
% S^{j}_{1},\ldots, S^{j}_{m -1}$) and one absorbing state ($S^{j}_{m}$ ) with 
% transition probabilities $p_{i,k}$ from state $S^{j}_{i}$ to state $S^{j}_{k}$ as follows:
% \begin{align*}
% p_{0,1}&:= \frac{\mu}{(\mu+1)} \frac{2 j (n-j) p_2   }{n^2  } & 
% p_{0,m}&:=\frac{ (n-j)p_1}{n  } &p_{i, m} &:=2 \frac{i}{\mu} \frac{\mu-i}{\mu} 
% \frac{p_0}{4} \quad \text{ if } i>0 
% \\
% \end{align*}
% \begin{align*}
% % 
% % 
% p_{1,2}&:= p_{0} \left( \left(\frac{1}{\mu}\right)^2 
% \frac{\mu-1}{\mu+1} 
% +2 \frac{1}{\mu} \frac{\mu-1}{\mu}\frac{1}{4}  \frac{\mu-1}{\mu+1}            
% \right) \\
% % 
% p_{1,0}&:=p_{0} \left( \left(\frac{\mu-1}{\mu}\right)^2 +2 \frac{1}{\mu} 
% \frac{\mu-1}{\mu}\frac{1}{4} \right) \frac{1}{\mu+1} \\
% % 
% % 
% p_{i,i+1}&:= p_{0} \left( \left(\frac{i}{\mu}\right)^2 \min{ 
% \left(\frac{\mu-i}{\mu+1}, 1/4\right)} +2 \frac{i}{\mu} 
% \frac{\mu-i}{\mu}\frac{1}{4}  \frac{\mu-i}{\mu+1}\right) \quad \text{ if } m-1>i>1 
% \\
% % 
% p_{i,i-1}&:=p_{0} \left( \left(\frac{\mu-i}{\mu}\right)^2 +2 
% \frac{i}{\mu} \frac{\mu-i}{\mu}\frac{1}{4} + \left(\frac{i}{\mu}\right)^2 
% \frac{1}{16}\right) \frac{i}{\mu+1}  \quad \text{ if }
% m>i>1 \\
% % 
% p_{m,m}&:=1\\
% % 
% p_{0,0}&:= 1-p_{0,1}-p_{0,m} \\
% % 
% p_{m-1, m-1}&:= 1-p_{m-1,m-2}-p_{m-1,m} \\
% % 
% p_{i, i}&:= 1-p_{i,i-1}-p_{i,i+1}-p_{i,m} \quad \text{ if } 0<i<m-1 \\
% % 
% p_{i,j}&:=0 \quad otherwise
% \end{align*}
% \end{definition}

\subsection{Proof of Lemma~\ref{lem-potbound}}

We will first show that $E[T^{j}]\leq (1+o(1))E[T_{0}^{j}]$ .
% and thus $E[T]\leq o(n \log{n})+ \sum_{j=0}^{n-1} E[T_{0}^{j}]$.  
In order to achieve this result, we will first establish that $g^{j}_{t}\leq 
E[T_{0}]\quad \forall t>0 $ and then that  $\Delta_t \geq 1-o(1)
1 \quad \forall t>0$.  These two results will allow us to use 
Theorem~\ref{thm-drift} and conclude that $E[T^{j}]\leq (1+o(1)) E[T_{0}^{j}]$. 
For the rest of the analysis of $E[T^{j}]$, we will drop the superscript of 
$E[T_{i}^{j}]$ and $g^{j}_{t}$.

\begin{proof}
 We are interested in $\max{(E[T_{0}], E[T_{1}], \ldots, E[T_{m}])}$ since 
these are the values that the potential function $g$ can have. According to the 
transition probabilities in Definition~\ref{def-mark}, $p_{i+1,m} \geq p_{i,m} 
$  for all $i$. Using this observation and the law of total expectation  we can 
show that not only $\max{(E[T_{0}], E[T_{1}], \ldots, E[T_{m}])}= E[T_{0}]$ but 
also $E[T_{i-1}] \geq E[T_{i}]$ for all $i$.  
% \begin{lemma}\label{prorder}
% $E[T_{i-1}] \geq E[T_{i}]$ for all $ 1 \leq i \leq m$.  
% \end{lemma}
% \begin{proof}
First, we will prove that $E[T_{m-2}] \geq E[T_{m-1}]$ by contradiction. Then, 
we will prove by induction that $E[T_{i-1}] \geq E[T_{i}]$ for all $i$. For this 
induction we will use $E[T_{m-2}] \geq E[T_{m-1}]$ as our basic step and we will 
prove by contradiction that if for all $j>i$ $E[T_{j-1}]\geq E[T_{j}]$ holds 
then $E[T_{i-1}] \geq E[T_{i}]$ must also hold. 

If we use the law of total expectation for the absorbing time starting from 
state  $0<i<m$, we obtain:
\begin{align*}
&E[T_{i}] = p_{i, i+1} (E[T_{i+1}] +1 ) + p_{i, i-1} (E[T_{i-1}] +1 ) 
\\&+ p_{i, m+1} + (1- p_{i, i+1} - p_{i, i-1}- p_{i, m}) (E[T_{i}] +1 ) 
\end{align*}

This equation can be rearranged as follows:
\[
1= p_{i, i+1} (E[T_i] -E[T_{i+1}] ) + p_{i, i-1} (E[T_i] -E[T_{i-1}] ) 
+ p_{i,m} E[T_i] 
\]
For the special case of $i=m$, we have:
\[
1= p_{m-1, m} (E[T_{m-1}] ) + p_{m-1, m-2} (E[T_{m-1}] -E[T_{m-2}] ) 
\]
If we introduce the allegedly contradictory assumption $E[T_{m-2}] < 
E[T_{m-1}]$, the above equation implies:
\begin{align*}
&1 > p_{m-1, m} (E[T_{m-1}] )  \implies  \frac{1}{p_{m-2,m}} 
\geq \frac{1}{p_{m-1, m} }\\ &> E[T_{m-1}] > E[T_{m-2}] \\ &\implies 
\frac{1}{p_{m-2,m} }  > E[T_{m-2}] 
\end{align*}

Given that $\frac{1}{p_{i, m} }  > E[T_{i}]$  and $E[T_{i+1}] > E[T_{i}]$   the 
law of total expectation for $i$ implies:
\[
1= p_{i, i+1} (E[T_{i}] -E[T_{i+1}]) + p_{i, i-1} (E[T_{i}] -E[T_{i-1}]) + 
p_{i, m} E[T_{i}]
\]
\[
1< p_{i, i+1} (E[T_{i}] -E[T_{i+1}]) + p_{i, i-1} (E[T_{i}] -E[T_{i-1}]) + 1
\]
\[
0< p_{i, i-1} (E[T_{i}] -E[T_{i-1}]) 
\]
\[
E[T_{i-1}]< E[T_{i}]  \implies \frac{1}{p_{i-1, m} }\geq\frac{1}{p_{i, m} } > 
E[T_i]  
> E[T_{i-1}]
\]
Thus the allegedly contradictory claim $E[T_{m-1}] > E[T_{m-2}]$ induces over 
$i$ 
such that it implies $E[T_1] > E[T_0]$ and $1/p_{0,m} > E[T_0]$. We can now 
write the 
total law of expectation for $i=0$.
\[
1= p_{0,1} (E[T_{0}] - E[T_{1}]) + p_{0,m} E[T_{0}]
 \]
\[
1 < p_{0,1} (E[T_{0}] - E[T_{1}]) + 1
 \]
\[
0 < p_{0,1} (E[T_{0}] - E[T_{1}]) 
 \]
\[
0 > p_{0,1} 
 \]
The last statement is a contradiction since a probability cannot be negative. 
This contradiction proves the initial claim $E[T_{m-2}] \geq E[T_{m-1}]$ 

We will now follow a similar route to prove that $E[T_{i-1}] \geq E[T_{i}]$ for 
all 
$i$.
Given that for all $j>i$  $E[T_{j-1}] \geq E[T_{j}]$ holds, we will show that 
$E[T_i>E[T_{i-1}]$
creates a contradiction. We start with the law of total expectation for $E[T_i$:
\[
1= p_{i, i+1} (E[T_{i}] -E[T_{i+1}]) + p_{i, i-1} (E[T_{i}] -E[T_{i-1}]) + 
p_{i, m} E[T_{i}]
\]
Our assumption `` $\forall j>i \quad E[T_j] \leq E[T_{j-1}]$'' 
implies that $E[T_{i}]-E[T_{i+1}] \geq 0$, thus we obtain: 
\[
1> p_{i, i-1} (E[T_{i}] -E[T_{i-1}]) + p_{i, m} E[T_{i}]
\]
With our allegedly contradictory assumption $E[T_{i}]-E[T_{i-1}]>0$ we obtain:
\[
1> p_{i, m} E[T_{i}] \implies \frac{1}{p_{i-1, m} }  \geq \frac{1}{p_{i, m} } > 
E[T_i]  > E[T_{i-1}]
\]
We have already shown above that $1/p_{i-1, m}\geq 1/p_{i, m}  > E[T_i]  > 
E[T_{i-1}]$ can be 
induced over $i$ and implies $E[T_1] > E[T_0]$ and $1/p_{0,m} > E[T_0]$. Then 
we can 
conclude that: 
\begin{equation}\label{eq-abord}
E[T_i] \geq E[T_{i+1}]\quad \forall 0 \leq i \leq \lceil\mu/2\rceil-1.  
\end{equation}

The above conclusion implies that $E[T_0]$ is the largest value that our 
potential function can have and $E[T_{i}] - E[T_{i+1}]$ is non-negative 
for all $i$.
\end{proof}

% \section{Proof of Lemma~\ref{lem-drift}}

\subsection{Proof of Lemma~\ref{lem-relate}}
Here, we will use the following additive drift 
theorem from \cite{lehrewittdrift},

\begin{theorem}[Theorem 3 in \cite{lehrewittdrift}]
 \label{thm-drift}
Let $(X_t)_{t\in\N_0}$, be a stochastic process, adapted to a filtration 
$(\filt)_{t\in\N_0}$, over some state space $S\subseteq \{0\}\cup 
[x_{min},x_{max}]$, where 
$x_{min}\ge 0$. 
Let $g:\{0\}\cup [x_{min},x_{max}]\to \reals^{\ge 0}$ be any function
such that $g(0)=0,$ and $0<g(x)<\infty$ for all 
$x\in[x_{min},x_{max}]\setminus\{0\}$.
Let $\mathcal{T}_a=\min\{t\mid X_t\le a\}$ for $a\in \{0\}\cup 
[x_{min},x_{max}]$. 
If $E[g(X_t)-g(X_{t+1})  \filtcond{ X_t\ge x_{min}}]\ge 
\alpha_{\mathrm{u}}$ 
for some $\alpha_{\mathrm{u}}>0$ then 
$E[\mathcal{T}_0\mid X_0] \le \frac{g(X_0)}{\alpha_{\mathrm{u}}}$.
\end{theorem}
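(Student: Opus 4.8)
The plan is to prove the bound by tracking the potential $g$ along the process \emph{stopped} at the first hitting time $\mathcal{T}_0$ and balancing its total expected decrease against the accumulated running time. The key structural fact is that, since the state space is $\{0\}\cup[x_{min},x_{max}]$ with $g$ vanishing only at $0$, on the not-yet-absorbed event $\{\mathcal{T}_0>t\}$ we necessarily have $X_t\ge x_{min}$, which is exactly the regime in which the hypothesis supplies an expected one-step decrease of at least $\alpha_{\mathrm{u}}$. Intuitively, $g$ starts at $g(X_0)$, can never go negative, and drains at rate at least $\alpha_{\mathrm{u}}$ per step until absorption; hence absorption must occur within $g(X_0)/\alpha_{\mathrm{u}}$ steps in expectation.

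First I would set up the stopped process $X_{t\wedge\mathcal{T}_0}$ and record that $\{\mathcal{T}_0>t\}$ is $\filt$-measurable. The central one-step estimate is
\[
E\big[g(X_{t\wedge\mathcal{T}_0})-g(X_{(t+1)\wedge\mathcal{T}_0})\mid X_0\big]\ \ge\ \alpha_{\mathrm{u}}\,\pr[\mathcal{T}_0>t\mid X_0].
\]
To obtain it I would note that the bracketed difference equals $\mathbf{1}\{\mathcal{T}_0>t\}\,(g(X_t)-g(X_{t+1}))$, since the two stopped values coincide once $\mathcal{T}_0\le t$; then take the conditional expectation given $\filt$, pull out the $\filt$-measurable indicator, apply the drift hypothesis on $\{\mathcal{T}_0>t\}\subseteq\{X_t\ge x_{min}\}$, and finally condition down to $X_0$ by the tower property.

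Next I would sum this inequality over $t=0,\dots,N-1$. The left-hand side telescopes to $g(X_0)-E[g(X_{N\wedge\mathcal{T}_0})\mid X_0]$, and dropping the non-negative subtracted term (using $g\ge0$) yields $g(X_0)\ge\alpha_{\mathrm{u}}\sum_{t=0}^{N-1}\pr[\mathcal{T}_0>t\mid X_0]$. Letting $N\to\infty$ and invoking the tail-sum identity $\sum_{t\ge0}\pr[\mathcal{T}_0>t\mid X_0]=E[\mathcal{T}_0\mid X_0]$ gives $E[\mathcal{T}_0\mid X_0]\le g(X_0)/\alpha_{\mathrm{u}}$. An equivalent packaging verifies that $Z_t:=g(X_{t\wedge\mathcal{T}_0})+\alpha_{\mathrm{u}}(t\wedge\mathcal{T}_0)$ is a supermartingale and reads off $E[Z_t\mid X_0]\le Z_0=g(X_0)$; I would keep the telescoping version as primary since it sidesteps optional-stopping integrability conditions.

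The step I expect to require the most care is the correct application of the hypothesis only on the absorbed-complement event together with the limit $N\to\infty$: the finite-horizon inequality actually bounds $E[(N\wedge\mathcal{T}_0)\mid X_0]$, and one must justify that this upgrades to a bound on $E[\mathcal{T}_0\mid X_0]$ even though $\mathcal{T}_0$ is \emph{a priori} possibly infinite. This is resolved by monotone convergence as $N\wedge\mathcal{T}_0\uparrow\mathcal{T}_0$, and the resulting finite bound retroactively certifies $\mathcal{T}_0<\infty$ almost surely. The measurability bookkeeping — that $\{\mathcal{T}_0>t\}$ is $\filt$-measurable and contained in $\{X_t\ge x_{min}\}$, so the conditional drift assumption is never invoked outside its stated scope — is the other point I would state explicitly rather than leave implicit.
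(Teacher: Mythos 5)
The paper does not prove this statement --- it is imported verbatim as Theorem~3 of the cited Lehre--Witt reference and stated in the appendix only for reviewer convenience --- so there is no internal proof to compare against. Your argument is correct and coincides with the canonical proof of additive drift: stopping the potential at $\mathcal{T}_0$, using that $\{\mathcal{T}_0>t\}$ is $\mathcal{F}_t$-measurable and contained in $\{X_t\ge x_{min}\}$ (so the drift hypothesis is only ever invoked in its stated scope), telescoping the one-step estimates, and upgrading the finite-horizon bound on $E[N\wedge\mathcal{T}_0\mid X_0]$ to $E[\mathcal{T}_0\mid X_0]$ by monotone convergence.
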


\begin{proof}[Proof of Lemma~\ref{lem-relate}]
Since $\Delta_t \geq (1-o(1))$  from Lemma~\ref{lem-drift} and $g^{j}_{t} \leq 
E[T_{0}^{j}]$ from Lemma~\ref{lem-potbound}, we can apply 
Theorem~\ref{thm-drift} to obtain $E[T^j] \leq (1+o(1)) E[T_{0}^{j}]$.
Once an individual with $j+1$ 1-bits is sampled for the first time it takes 
$\bo{\mu\log{\mu}}$ iterations before the whole population consists of 
individuals with at least  $j+1$ 1-bits \cite{CorusOlivetoTEVC}. If 
the population size is in the order of $o(\log{n}/\log{\log{n}})$, then the 
total number of 
iterations where there  are individuals with different fitness values in the 
population is in the order of $o(n \log{n})$. Since $j \in \{0,1,\ldots,n-1\}$, 
we can establish that 
\[E[T]\leq o(n\log{n})+\sum_{j=0}^{n-1}E[T^{j}] \leq 
o(n\log{n})+(1+o(1))\sum_{j=0}^{n-1} E[T_{0}^{j}].\]
\end{proof}

\subsection{Proof of Lemma~\ref{lem-absbound}}

We made use of the following lemma to obtain .
\begin{equation}\label{eq-invbound}
E[T_{0}^{j}] = \sum_{k=1}^{m}n_{1,k} < n_{1,1} + \sum_{k=2}^{m} 
\frac{1}{p_{k-1,m}} \leq  n_{1,1} + \bo{\mu^2}.
\end{equation}
 \begin{lemma}[Lemma 2.1 in \cite{li2010inverses}] \label{lem-simbound}
\label{imported}
 Let $A$ be a SDD matrix, then $A^{-1}=[n_{ik}]$ exists and for any $i \neq k$,
 \[
  |n_{ik}|\leq  n_{kk}  \frac{\sum\limits_{i\neq k} |a_{ik}| }{|a_{ii}|}\leq 
 n_{kk}\leq \frac{1}{|a_{kk}| - \sum\limits_{ l \neq k} |a_{k l}| 
 }
 \]
 \end{lemma}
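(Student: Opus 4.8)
The plan is to prove the statement along the standard route for strictly (row) diagonally dominant matrices: establish invertibility, control a single linear system by a maximum-modulus argument, and then read the three claimed bounds directly off that argument. Throughout I would represent the $k$-th column of $A^{-1}$ as the vector $x$ solving $Ax=e_k$, so that $x_i=n_{ik}$ and in particular $x_k=n_{kk}$; all three inequalities then become statements about the components of this single vector $x$.

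For invertibility I would invoke Gershgorin's circle theorem (in its Levy--Desplanques form): since $A$ is SDD, every Gershgorin disc is centred at $a_{ii}$ with radius $\sum_{l\neq i}|a_{il}|<|a_{ii}|$ and hence excludes the origin, so $0$ is not an eigenvalue and $A^{-1}$ exists. The same observation, applied to $A$ and to each principal submatrix obtained by deleting a matching row and column (which is again SDD with the same diagonal), shows, when the diagonal is positive as it is in the application to $I-Q$, that all eigenvalues have positive real part and therefore $\det A>0$ and $n_{kk}=\det(A_{\hat k\hat k})/\det A>0$. This positivity is what lets the middle inequalities be written with $n_{kk}$ rather than $|n_{kk}|$.

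The heart of the argument is a maximum principle: I claim $|x_k|=\max_i|x_i|$, i.e. the extremal component of $x$ sits at the index of the nonzero right-hand side. I would prove this by contradiction. If the maximum $|x_{i^*}|$ were attained at some $i^*\neq k$, the $i^*$-th equation is homogeneous, $a_{i^*i^*}x_{i^*}=-\sum_{l\neq i^*}a_{i^*l}x_l$, and strict dominance gives $|a_{i^*i^*}|\,|x_{i^*}|\le\bigl(\max_l|x_l|\bigr)\sum_{l\neq i^*}|a_{i^*l}|<|a_{i^*i^*}|\max_l|x_l|$, forcing $|x_{i^*}|<\max_l|x_l|$, a contradiction. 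Once the maximum is pinned to $k$, the $k$-th equation $a_{kk}x_k+\sum_{l\neq k}a_{kl}x_l=1$ yields $|a_{kk}|\,n_{kk}\le 1+n_{kk}\sum_{l\neq k}|a_{kl}|$, i.e. $n_{kk}\le\bigl(|a_{kk}|-\sum_{l\neq k}|a_{kl}|\bigr)^{-1}$, the rightmost bound. The off-diagonal bound then comes for free: for $i\neq k$ the homogeneous $i$-th equation gives $|n_{ik}|=|x_i|\le\frac{\sum_{l\neq i}|a_{il}|}{|a_{ii}|}\max_l|x_l|=n_{kk}\frac{\sum_{l\neq i}|a_{il}|}{|a_{ii}|}\le n_{kk}$, where the final step uses strict dominance to make the ratio at most $1$.

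I expect the main obstacle to be the maximum-principle step, since everything else is bookkeeping once it is in place; care is needed both to rule out a zero maximum (otherwise $x=0$ contradicts $Ax=e_k\neq 0$) and to argue the maximum is genuinely attained at $k$ rather than merely bounded there. A secondary subtlety is matching the direction of dominance to the sums actually used when the lemma is applied to $I-Q$: the clean proof above produces row sums, so if a column-sum version is required one simply runs the identical argument on the transpose $A^{\top}$, replacing each row sum by the corresponding column sum.
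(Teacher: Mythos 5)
Your proof is correct, but there is nothing in the paper to compare it against: Lemma~\ref{imported} is imported verbatim from Lemma~2.1 of the cited reference \cite{li2010inverses} and the paper gives no proof of it, so you have supplied an argument where the authors rely on citation. Your argument is the standard one for strictly row-diagonally dominant matrices, and all three inequalities do follow from the single maximum-modulus step: invertibility via Levy--Desplanques, the observation that the extremal component of the solution of $Ax=e_k$ must sit at index $k$ because every other equation is homogeneous, the rightmost bound from the $k$-th (inhomogeneous) equation, and the off-diagonal bound from the homogeneous rows. Two of the points you flag as subtleties are indeed the ones that matter here. First, the lemma as stated writes $n_{kk}$ without absolute values, which silently presumes $n_{kk}>0$; your determinant/continuity argument for positive-diagonal SDD matrices covers exactly the case needed for $I-Q$, whose diagonal entries $1-p_{i-1,i-1}$ are strictly positive. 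Second, the index bookkeeping in the paper's statement (and in its restatement in the main text, where a column sum $\sum_{l\neq j}|a_{l,k}|$ appears) is inconsistent; the application that derives $|n_{1,k}|\leq 1/p_{k-1,m}$ only works with the \emph{row} sum $\sum_{l\neq k}|a_{kl}|$, since for $I-Q$ one has $|a_{kk}|-\sum_{l\neq k}|a_{kl}|=p_{k-1,m}$ exactly, whereas the column-sum version gives no such cancellation. Your row-sum formulation is therefore the right one for the way the lemma is used.
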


In order to show that the entries of the fundamental matrix of $M^j$ 
are all non-negative, we will employ the following result.

\begin{theorem}[Theorem 3.1 in \cite{li2010inverses}] \label{thm-signinv}
If $A$ is a tridiagonal non-singular SDD matrix and $a_{i,i}>0$ then $A^{-1}= 
[n_{i,k}]$ exists and 
\begin{itemize}
 \item $sign(n_{i,i})=1$
 \item $sign(n_{i,k})=(-1)^{i+k} \prod^{i}_{l=k+1}a_{l,l-1}, i>k$ 
 \item $sign(n_{i,k})=(-1)^{i+k} \prod^{k}_{l=i}a_{l,l+1}, i<k$
\end{itemize}
\end{theorem}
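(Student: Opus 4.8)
The plan is to prove the sign pattern through Cramer's rule, reducing each entry $n_{i,k}$ to a single minor of $A$, and then to exploit the tridiagonal structure so that the relevant minor factors into a product of a sub- or super-diagonal "staircase" together with two decoupled principal minors whose positivity is guaranteed by strict diagonal dominance. Throughout, the right-hand sides in the statement are read as signs, i.e. $\operatorname{sign}(n_{i,k})$ equals $(-1)^{i+k}$ times the sign of the indicated product of off-diagonal entries.

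First I would record the positivity facts on which everything rests. Since $A$ is SDD with $a_{i,i}>0$, Gershgorin's theorem places every eigenvalue in a disc centred at some $a_{i,i}>0$ of radius $\sum_{k\neq i}|a_{i,k}|<a_{i,i}$, hence strictly inside the open right half-plane; as complex eigenvalues of the real matrix occur in conjugate pairs, $\det A=\prod_l \lambda_l>0$, and in particular $A$ is non-singular so $A^{-1}=[n_{i,k}]$ exists. The same argument applies verbatim to every leading principal submatrix $A[1{:}l]$ and every trailing principal submatrix $A[l{:}m]$, because restricting an SDD matrix to a contiguous block of coordinates only removes nonnegative terms from each off-diagonal row sum and therefore preserves both strict dominance and the positive diagonal. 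Writing $\theta_l:=\det A[1{:}l]$ with $\theta_0:=1$ and $\phi_l:=\det A[l{:}m]$ with $\phi_{m+1}:=1$, I thus obtain $\theta_l>0$ and $\phi_l>0$ for all $l$, and $\theta_m=\det A>0$.

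Next I would invoke Cramer's rule in the form $n_{i,k}=(-1)^{i+k}M_{k,i}/\det A$, where $M_{k,i}$ is the minor obtained by deleting row $k$ and column $i$. For the diagonal case $i=k$, deleting row and column $i$ decouples $A$ into its leading and trailing blocks, so $M_{i,i}=\theta_{i-1}\phi_{i+1}>0$ and $\operatorname{sign}(n_{i,i})=(-1)^{2i}\cdot(+)=1$. For $i>k$ the crux is the factorisation
\[
M_{k,i}=\Big(\prod_{l=k+1}^{i}a_{l,l-1}\Big)\,\theta_{k-1}\,\phi_{i+1},
\]
after which the positivity of $\theta_{k-1}$ and $\phi_{i+1}$ gives $\operatorname{sign}(M_{k,i})=\operatorname{sign}\big(\prod_{l=k+1}^{i}a_{l,l-1}\big)$ and hence the stated formula; the case $i<k$ is obtained identically by interchanging the roles of rows and columns, producing the superdiagonal product $\prod_{l=i}^{k}a_{l,l+1}$.

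I expect the main obstacle to be justifying this factorisation rigorously. Here the tridiagonal structure does the work: ordering the retained rows as the blocks $\{1,\dots,k-1\}$, $\{k+1,\dots,i\}$, $\{i+1,\dots,m\}$ and the retained columns as $\{1,\dots,k-1\}$, $\{k,\dots,i-1\}$, $\{i+1,\dots,m\}$, one checks that every off-diagonal block below the diagonal vanishes (a row with index $\geq k+1$ has no nonzero entry in a column of index $\leq k-1$, etc.), so the submatrix is block-triangular with diagonal blocks $A[1{:}k-1]$, the middle staircase, and $A[i+1{:}m]$. The middle block, with row $r$ paired against column $r-1$, is itself triangular with exactly the subdiagonal entries $a_{k+1,k},\dots,a_{i,i-1}$ on its diagonal, so its determinant is $\prod_{l=k+1}^{i}a_{l,l-1}$. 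I would make this precise either by exhibiting the simultaneous row/column permutation that realises the block-triangular form, or, if a fully self-contained argument is preferred, by induction on $i-k$ via a single cofactor expansion of $M_{k,i}$ that peels off one factor $a_{l,l-1}$ at a time. Once the factorisation is in hand the remainder is bookkeeping, since all determinant factors other than the off-diagonal staircase are strictly positive by the first step.
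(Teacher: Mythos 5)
This statement is imported verbatim from Li and Huang (Theorem~3.1 in \cite{li2010inverses}); the paper itself offers no proof, so there is no internal argument to compare against, and your proposal must stand on its own. It does: the route you take --- Cramer's rule in the adjugate form $n_{i,k}=(-1)^{i+k}M_{k,i}/\det A$, the observation that deleting row $k$ and column $i$ of a tridiagonal matrix leaves a block upper triangular array with diagonal blocks $A[1{:}k-1]$, a staircase block, and $A[i+1{:}m]$, and the positivity of all contiguous principal minors via Gershgorin (SDD with positive diagonal forces every eigenvalue into the open right half-plane, hence $\det>0$) --- is the standard and correct derivation of the sign pattern, and every step you sketch checks out. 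Two small remarks. First, your factorisation is in fact slightly cleaner than you suggest: with the row blocks $\{1,\dots,k-1\},\{k+1,\dots,i\},\{i+1,\dots,m\}$ and column blocks $\{1,\dots,k-1\},\{k,\dots,i-1\},\{i+1,\dots,m\}$ taken in their natural increasing order, the minor is \emph{already} block upper triangular --- no permutation needs to be exhibited, and the middle block is upper triangular in block coordinates with exactly $a_{k+1,k},\dots,a_{i,i-1}$ on its diagonal. Second, for the case $i<k$ your own construction yields the product $\prod_{l=i}^{k-1}a_{l,l+1}$ (that is, $k-i$ factors, the exact mirror of the $i>k$ case), whereas the theorem as printed writes $\prod_{l=i}^{k}a_{l,l+1}$; since an extra factor would flip the sign when the superdiagonal entries are negative, this matters, but the printed upper limit is an off-by-one slip in the statement rather than in your argument --- the paper's own appendix uses the theorem with $(-1)^{i+k+k-i}=(-1)^{2k}=1$, i.e.\ counting $k-i$ negative factors, consistent with your derivation. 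You silently copied the typo into your concluding sentence; trust your factorisation instead.
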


 Since the diagonal entries of $I-Q$ are strictly positive, according to 
Theorem~\ref{thm-signinv} the diagonal entries of $N$ are also positive. 
The non-diagonal entries of $I-Q$ are all negative thus the series 
multiplication from Theorem~\ref{thm-signinv} for $i>k$ reduces to 
$(-1)^{i+k+i-k}=(-1)^{2i}=1$. Similarly for the case  $i<k$ the multiplication 
reduces to $(-1)^{i+k+k-i}=(-1)^{2k}=1$.

\begin{proof}
Starting from Inequality~\ref{eq-invbound} and applying 
Theorem~\ref{thm-maininv} we obtain:
\begin{align*}
 E[T_{0}^{j}] &\leq  n_{1,1} + \bo{\mu^2} \leq \frac{1}{a_{1,1}+ 
a_{1,2}\xi_{2}} 
+ \bo{\mu^2}\\&= \frac{1}{p_{0,m}+p_{0,1}(1-\xi_{2})} +\bo{\mu^2}
\\\leq& \frac{1}{p_{0,m}+p_{0,1}(1-\xi_{2})}+ \bo{\mu^2}\\ &\leq 
\frac{1}{\frac{ (n-j)p_1}{n  }+\frac{\mu}{(\mu+1)} \frac{2j 
(n-j)p_2  }{n^2 }(1-\xi_{2})}+ \bo{\mu^2}\\
&=\frac{n}{n-j} \frac{1}{p_1+ \frac{\mu}{(\mu+1)} \frac{2 j  p_2  }{n  } 
(1-\xi_{2})} +o(\log{n}) 
\end{align*} 
where we substitute $p_{0,1}$ and $p_{0,m}$ with their values declared 
in Definition~\ref{def-mark}.
The definitions of $\xi_i$ and $\xi_m$ are obtained by simply substituting the 
matrix entries in Theorem~\ref{thm-maininv} with their respective values from 
Equations~\ref{eq-matfirst} to \ref{eq-matlast}.
\end{proof}

\subsection{Proofs of main results}

% \mainone
\begin{proof}[Proof of Theorem~\ref{thm-main}]
 Combining Lemma~\ref{lem-relate} and Lemma~\ref{lem-absbound} we obtain:
 \begin{align}
 E[T]&\leq o(n\log{n})+(1+o(1))\sum_{j=0}^{n-1}E[T^{j}] \nonumber\\ &\leq  
o(n\log{n})+ (1+o(1))\sum_{j=0}^{n-1}\left( \frac{n}{n-j} \frac{1}{p_1+ 
\frac{\mu}{(\mu+1)} \frac{2 j  p_2  }{n  } 
(1-\xi_{2})} \right) 
\label{eq-proofmain}
\end{align}
We will now divide the sum into two smaller sums:
\begin{align*}
 &\sum_{j=1}^{n}\frac{n}{n-j} \frac{1}{p_1+ \frac{\mu}{(\mu+1)} \frac{2 j  p_2  
}{n  } 
(1-\xi_{2})} 
\\ &= \sum_{j=1}^{n- n/\sqrt{\log{n}}} 
\frac{n}{n-j} \frac{1}{p_1+ \frac{\mu}{(\mu+1)} \frac{2 j  p_2  }{n  } 
(1-\xi_{2})} \\ &+ \sum_{j=n- 
n/\sqrt{\log{n}}+1}^{n} 
\frac{n}{n-j} \frac{1}{p_1+ \frac{\mu}{(\mu+1)} \frac{2 j  p_2  }{n  } 
(1-\xi_{2})} \\
&\leq \bo{n \sqrt{\log{n}}} + \frac{n}{p_1+ \frac{2   p_2  \mu}{(\mu+1)} 
\left(1-\frac{1}{\sqrt{\log{n}}}\right)
(1-\xi_{2})} \sum_{j=n- 
n/\sqrt{\log{n}}+1}^{n} 
\frac{1}{n-j} \\
&\leq \bo{n \sqrt{\log{n}}} +\frac{n \ln{n}}{p_1+ \frac{2   p_2  \mu}{(\mu+1)} 
\left(1-\frac{1}{\sqrt{\log{n}}}\right)
(1-\xi_{2})} 
% &= (1+o(1))n \log{n}   
%  \frac{e^c}{c+ 
% \frac{ c^2 \mu}{(\mu+1)}     (1-\xi_{2})} 
 \end{align*}
We conclude the proof by substituting the sum in Eq~\ref{eq-proofmain} with the above expression.
\begin{align*}
E[T] &\leq o(n\log{n})+ (1+o(1))\left(\bo{n \sqrt{\log{n}}} + \frac{n 
\ln{n}}{p_1+ \frac{2   p_2  \mu}{(\mu+1)} 
\left(1-\frac{1}{\sqrt{\log{n}}}\right)
(1-\xi_{2})} \right)\\
&\leq o(n\log{n})+ (1+o(1))\left(\bo{n \sqrt{\log{n}}} +(1+o(1))\frac{n 
\ln{n}}{p_1+ \frac{2   p_2  \mu}{(\mu+1)} 
(1-\xi_{2})} \right)\\
&=(1+o(1))n \ln{n}\frac{1}{p_1+ \frac{2   p_2  \mu}{(\mu+1)} 
(1-\xi_{2})}
\end{align*} 

The expressions for $\xi_i$ and $\xi_m$ come from Lemma~\ref{lem-absbound} and 
proves the first statement.

For the second statement, we adapt the result for the variant of \muga which 
does not evaluate copies of either parents. When there is no diversity in the 
population the offspring is identical to the parent with probability $p_0$. 
Then, given that a fitness evaluation occurs, the probability of improvement via 
mutation is $p_{0,m}/(1-p_0)$ and the probability that diversity 
is introduced is $p_{0,1}/(1-p_0)$. The proof is identical to the 
proof of first statement, except for using probabilities 
$p^{*}_{0,m}=p_{0,m}/(1-p_0)$ and 
$p^{*}_{0,1}=p_{0,1}/(1-p_0)$  instead of $p_{0,1}$ and $p_{0,m}$ 
from Definition~\ref{def-mark}. Even if we pessimistically assume that a 
function evaluation occurs at every iteration when there is diversity in the 
population, we still get a $(1-p_0)$ decrease in the leading constant. 
\end{proof}

\end{document}